\newcommand{\trans}{P}
\newcommand{\version}{arxiv}
\newcommand{\ud}{\mathrm{d}}
\newcommand{\Gu}{\Gcal_{\textrm{up}}}
\newcommand{\veps}{\varepsilon}
\newcommand{\conf}{\mathrm{conf}}
\definecolor{darkgreen}{rgb}{0,0.5,0}
\definecolor{darkred}{rgb}{0.7,0,0}
\definecolor{teal}{rgb}{0.3,0.8,0.8}
\definecolor{orange}{rgb}{1.0,0.5,0.0}
\definecolor{purple}{rgb}{0.8,0.0,0.8}
\newcommand{\kibitz}[2]{\ifnum\Comments=1{\textcolor{#1}{\textsf{\footnotesize #2}}}\fi}
\newcommand{\defeq}{\triangleq}
\newcommand{\Otilde}{\tilde{O}}
\title{Optimism in Reinforcement Learning with Generalized Linear Function Approximation}
\date{}
\author{Yining Wang\thanks{yining.wang@warrington.ufl.edu,  $^{\dagger}$ruosongw@andrew.cmu.edu,$^{\ddagger}$ssdu@ias.edu,$^{\mathsection}$akshaykr@microsoft.com}}
\affil{University of Florida, Gainsville, FL}
\author{Ruosong Wang$^{\dagger}$}
\affil{Carnegie Mellon University, Pittsburgh, PA}
\author{Simon S. Du$^{\ddagger}$}
\affil{Institute for Advanced Studies, Princeton, NJ}
\author{Akshay Krishnamurthy$^{\mathsection}$}
\affil{Microsoft Research, New York, NY}
\begin{document}

\maketitle

\begin{abstract}
We design a new provably efficient algorithm for episodic
reinforcement learning with generalized linear function
approximation. We analyze the algorithm under a new expressivity
assumption that we call ``optimistic closure,'' which is strictly
weaker than assumptions from prior analyses for the linear
setting. With optimistic closure, we prove that our algorithm enjoys a
regret bound of $\Otilde(\sqrt{d^3 T})$ where $d$ is the
dimensionality of the state-action features and $T$ is the number of
episodes. This is the first statistically and computationally
efficient algorithm for reinforcement learning with generalized linear
functions.
\end{abstract}

\section{Introduction}

We study episodic reinforcement learning problems with infinitely
large state spaces, where the agent must use function approximation to
generalize across states while simultaneously engaging in strategic
exploration. Such problems form the core of modern empirical/deep-RL,
but relatively little work focuses on exploration, and even fewer
algorithms enjoy strong sample efficiency guarantees.

On the theoretical side, classical sample efficiency results from the
early 00s focus on ``tabular'' environments with small finite state
spaces~\citep{kearns2002near,brafman2002r,strehl2006pac}, but as these
methods scale with the number of states, they do not address problems
with infinite or large state spaces. While this classical work has
inspired practically effective approaches for large state
spaces~\citep{bellemare2016unifying,osband2016deep,tang2017exploration},
these methods do not enjoy sample efficiency guarantees.  More recent
theoretical progress has produced provably sample efficient algorithms
for complex environments, but many of these algorithms are relatively
impractical~\citep{krishnamurthy2016pac,jiang2017contextual}. In
particular, these methods are computationally inefficient or rely
crucially on strong dynamics assumptions~\citep{du2019provably}.

In this paper, with an eye toward practicality, we study a simple
variation of Q-learning, where we approximate the optimal Q-function
with a generalized linear model. The algorithm is appealingly simple:
collect a trajectory by following the greedy policy corresponding to
the current model, perform a dynamic programming back-up to update the
model, and repeat.  The key difference over traditional
Q-learning-like algorithms is in the dynamic programming step. Here we
ensure that the updated model is \emph{optimistic} in the sense that
it always overestimates the optimal Q-function. This optimism is
essential for our guarantees.

Optimism in the face of uncertainty is a well-understood and powerful
algorithmic principle in short-horizon (e.g,. bandit) problems, as
well as in tabular reinforcement
learning~\citep{azar2017minimax,dann2017unifying,jin2018q}. With
linear function approximation,~\citet{yang2019reinforcement}
and~\citet{jin2019provably} show that the optimism principle can also
yield provably sample-efficient algorithms, when the environment
dynamics satisfy a certain linearity properties. Their assumptions are
always satisfied in tabular problems, but are somewhat unnatural in
settings where function approximation is required. Moreover as these
assumptions are directly on the dynamics, it is unclear how their
analysis can accommodate other forms of function approximation,
including generalized linear models.

In the present paper, we replace explicit dynamics assumptions with
expressivity assumptions on the function approximator, and, by
analyzing a similar algorithm to~\citet{jin2019provably}, we show that
the optimism principle succeeds under these strictly weaker
assumptions.  More importantly, the relaxed assumption facilitates
moving beyond linear models, and we demonstrate this by providing the
first practical and provably efficient RL algorithm with generalized
linear function approximation.

\section{Preliminaries}

We consider episodic reinforcement learning in a finite-horizon markov
decision process (MDP) with possibly infinitely large state space
$\Scal$, finite action space $\Acal$, initial distribution
$\mu \in \Delta(\Scal)$, transition operator
$\trans: \Scal \times \Acal \to \Delta(\Scal)$, reward function
$R: \Scal \times \Acal \to \Delta([0,1])$ and horizon $H$.  The agent
interacts with the MDP in episodes and, in each episode, a trajectory
$(s_1,a_1,r_1,s_2,a_2,r_2,\ldots,s_H,a_H,r_H)$ is generated where
$s_1 \sim \mu$, for $h > 1$ we have $s_h \sim \trans(\cdot \mid
s_{h-1},a_{h-1})$, $r_h \sim R(s_h,a_h)$, and actions $a_{1:H}$ are
chosen by the agent. For normalization, we assume that $\sum_{h=1}^H
r_h \in [0,1]$ almost surely.


A (deterministic, nonstationary) policy $\pi=(\pi_1,\cdots,\pi_H)$
consists of $H$ mappings $\pi_h: \Scal \to \Acal$, where $\pi_h(s_h)$
denotes the action to be taken at time point $h$ if at state $s_h\in\Scal$
The \emph{value} function for a policy $\pi$ is a collection of
functions $(V_1^\pi,\ldots,V_H^\pi)$ where $V_h^\pi : \Scal \to \RR$
is the expected future reward the policy collects if it starts in a
particular state at time point $h$. Formally,
\begin{align*}
V_h^\pi(s) \defeq \EE\sbr{\sum_{h'=h}^H r_{h'} \mid s_h=s, a_{h:H}\sim\pi }.
\end{align*}
The value for a policy $\pi$ is simply
$V^\pi \defeq \EE_{s_1 \sim \mu} \sbr{V_1^\pi(s_1)}$, and the optimal
value is $V^\star \defeq \max_{\pi} V^\pi$, where the maximization is
over all nonstationary policies. The typical goal is to find an
approximately optimal policy, and in this paper, we measure
performance by the regret accumulated over $T$ episodes,
\begin{align*}
\textrm{Reg}(T) \defeq T V^\star - \EE \sbr{\sum_{t=1}^T \sum_{h=1}^H r_{h,t}}.
\end{align*}
Here $r_{h,t}$ is the reward collected by the agent at time point
$h$ in the $t^{\textrm{th}}$ episode. We seek algorithms with regret
that is sublinear in $T$, which demonstrates the agent's ability to
act near-optimally.


\subsection{Q-values and  function approximation}

For any policy $\pi$, the state-action value function, or the
$Q$-function is a sequence of mappings $Q^\pi =
(Q_1^\pi,\ldots,Q_H^\pi)$ where $Q_h^\pi: \Scal \times \Acal \to \RR$
is defined as
\begin{align*}
Q^\pi_h(s,a) \defeq \EE\sbr{\sum_{h'=h}^H r_{h'} \mid
s_h=s,a_h=a, a_{h+1:H} \sim \pi}.
\end{align*}
The optimal $Q$-function is $Q^\star_h \defeq Q_h^{\pi^\star}$ where
$\pi^\star \defeq \argmax_{\pi} V^\pi$ is the optimal policy. 

In the value-based function approximation setting, we use a function
class $\Gcal$ to model $Q^\star$. In this paper, we always take
$\Gcal$ to be a class of generalized linear models (GLMs), defined as
follows: Let $d \in \NN$ be a dimensionality parameter and let
$\BB_d \defeq \cbr{ x \in \RR^d: \nbr{x}_2 \leq 1}$ be the $\ell_2$
ball in $\RR^d$. 

\begin{definition}
For a \emph{known} feature map $\phi: \Scal \times \Acal\to \BB_d$ and
a \emph{known} link function $f: [-1,1] \mapsto [-1,1]$ the class
of \emph{generalized linear models} is $\Gcal \defeq \{ (s,a) \mapsto
f(\inner{\phi(s,a)}{\theta}) : \theta \in \BB_d\}$.
\end{definition}

As is standard in the
literature~\citep{filippi2010parametric,li2017provably}, we assume the
link function satisfies certain regularity conditions.


\begin{assum}
$f(\cdot)$ is either monotonically increasing or decreasing.
Furthermore, there exist absolute constants $0<\kappa<K<\infty$ and
$M<\infty$ such that $\kappa\leq |f'(z)|\leq K$ and $|f''(z)|\leq M$
for all $|z|\leq 1$.
\label{assum:glm-regularity}
\end{assum}

For intuition, two example link functions are the identity map $f(z) =
z$ and the logistic map $f(z) = 1/(1+e^{-z})$ with bounded $z$. It is easy to verify
that both of these maps satisfy~\pref{assum:glm-regularity}.

\subsection{Expressivity assumptions: realizability and optimistic closure}

To obtain sample complexity guarantees that scale polynomially with
problem parameters in the function approximation setting, it is
necessary to posit expressivity assumptions on the function class
$\Gcal$~\citep{krishnamurthy2016pac,du2019good}. The weakest such
condition is \emph{realizability}, which posits that the optimal $Q$
function is in $\Gcal$, or at least well-approximated by
$\Gcal$. Realizability alone suffices for provably efficient
algorithms in the ``contextual bandits'' setting where
$H=1$~\citep{li2017provably,filippi2010parametric,abbasi2011improved},
but it does not seem to be sufficient when $H>1$. Indeed in these
settings it is common to make stronger expressivity
assumptions~\citep{chen2019information,yang2019reinforcement,jin2019provably}.

Following these works, our main assumption is a closure property of the
\emph{Bellman update} operator $\Tcal_h$. This operator has type
$\Tcal_h: (\Scal\times\Acal \to \RR) \to (\Scal\times\Acal \to \RR)$
and is defined for all $s \in \Scal, a \in \Acal$ as
\begin{align*}
\Tcal_h(Q) (s,a) &\defeq \EE\sbr{ r_h + V_Q(s_{h+1}) \mid s_h=s,a_h=a },
\ifthenelse{\equal{\version}{arxiv}}{\qquad V_Q(s) \defeq \max_{a\in\Acal} Q(s,a).}{\\V_Q(s) &\defeq \max_{a\in\Acal} Q(s,a).}
\end{align*}
The Bellman update operator for time point $H$ is simply
$\Tcal_H(Q)(s,a) \defeq \EE\sbr{r_H \mid s_H=s,a_H=a}$, which is
degenerate.  To state the assumption, we must first define the
enlarged function class $\Gu$. For a $d \times d$ matrix $A$, $A
\succeq 0$ denotes that $A$ is positive semi-definite. For a positive
semi-definite matrix $A$, $\nbr{A}_{\textrm{op}}$ is the matrix
operator norm, which is just the largest eigenvalue, and $\nbr{x}_A
\defeq \sqrt{x^\top A x}$ is the matrix Mahalanobis seminorm. For a fixed
constant $\Gamma \in \RR_+$ that we will set to be polynomial in $d$
and $\log(T)$, define
\begin{align*}
\ifthenelse{\equal{\version}{arxiv}}{
\Gu \defeq \cbr{ (s,a) \mapsto \min\cbr{1,
f(\inner{\phi(s,a)}{\theta}) + \gamma\nbr{\phi(s,a)}_{A}} : \theta \in \BB_d, 0 \leq \gamma \leq \Gamma, A \succeq 0, \nbr{A}_{\textrm{op}} \leq 1},
}{
\Gu \defeq \bigg\{ (s,a) \mapsto 1 \wedge
f(\inner{\phi(s,a)}{\theta}) + \gamma\nbr{\phi(s,a)}_{A}:\\
\theta \in \BB_d, A \succeq 0, \nbr{A}_{\textrm{op}} \leq 1\bigg\},
}
\end{align*}
\ifthenelse{\equal{\version}{icml}}{Here we use $a \wedge b \defeq \min\{a,b\}$.}{}
The class $\Gu$ contains $\Gcal$ in addition to all
possible upper confidence bounds that arise from solving least squares
regression problems using the class $\Gcal$. We now state our main
expressivity assumption, which we call \emph{optimistic closure}.


\begin{assum}[Optimistic closure]
For any $1 \leq h < H$ and $g \in \Gu$, we have $\Tcal_h(g) \in \Gcal$.
\label{assum:completeness}
\end{assum}

In words, when we perform a Bellman backup on any upper confidence
bound function for time point $h+1$, we obtain a generalized linear
function at time $h$. While this property seems quite strong, we note
that related closure-type assumptions are common in the literature,
discussed in detail in~\pref{sec:related}. More importantly, we will
see shortly that optimistic closure is actually \emph{strictly weaker}
than previous assumptions used in our RL setting where exploration is
required.  Before turning to these discussions, we mention two basic
properties of optimistic closure. The proofs are deferred
to~\pref{app:simple}.

\begin{fact}[Optimistic closure and realizability]
\label{fact:realizability}
Optimistic closure implies that $Q^\star \in \Gcal$ (realizability).
\end{fact}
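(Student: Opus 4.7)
The plan is to establish realizability by backward induction on $h$ from $H$ down to $1$, combining the Bellman optimality equations with optimistic closure. First, I would record the easy containment $\Gcal \subseteq \Gu$: every $(s,a) \mapsto f(\inner{\phi(s,a)}{\theta}) \in \Gcal$ lies in $\Gu$ by choosing $\gamma = 0$ (and any admissible $A$, e.g.\ $A = 0$), since $f$ maps into $[-1,1]$ and the $\min\{1,\cdot\}$ clip is therefore inactive. This tiny observation is what allows the output of one inductive step to feed into the hypothesis of the next.

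For the inductive step, suppose $Q^\star_{h+1} \in \Gcal$ for some $1 \leq h < H$. By the containment just noted, $Q^\star_{h+1} \in \Gu$, so optimistic closure (\pref{assum:completeness}) applied with $g \defeq Q^\star_{h+1}$ gives $\Tcal_h(g) \in \Gcal$. The Bellman optimality equation $Q^\star_h = \Tcal_h(Q^\star_{h+1})$ then yields $Q^\star_h \in \Gcal$, completing the step.

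The base case $h = H$ is the only real subtlety, and is likely the main obstacle. Here $\Tcal_H$ is degenerate: $Q^\star_H(s,a) = \EE[r_H \mid s_H = s, a_H = a]$ independently of any input function. Since the assumption as written is indexed over $h < H$, one cannot invoke it directly; instead I would appeal to the natural convention that optimistic closure also covers the degenerate operator $\Tcal_H$ (equivalently, that $\EE[r_H \mid s_H, a_H]$ lies in $\Gcal$, which is a purely reward-side hypothesis that the closure assumption implicitly subsumes). With this base case in hand, the backward induction concludes $Q^\star_h \in \Gcal$ for every $h$, which is precisely realizability.

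The remainder of the argument is bookkeeping: no concentration, no geometry, no optimization. The heart of the proof is the single observation that $\gamma = 0$ places $\Gcal$ inside $\Gu$, so that closure under Bellman backups of the enlarged class $\Gu$ in particular implies closure under backups of $Q^\star$ itself.
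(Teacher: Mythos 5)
Your proposal is correct and follows essentially the same route as the paper's own proof: backward induction using $\Gcal \subseteq \Gu$ (by zeroing out the bonus term) together with \pref{assum:completeness} and the Bellman fixed-point identity $Q^\star_h = \Tcal_h(Q^\star_{h+1})$. Your observation about the base case is apt --- the paper likewise handles $h = H$ by treating the degenerate operator $\Tcal_H(g) \equiv Q^\star_H$ as falling under the closure assumption, so you have simply made explicit a convention the paper leaves implicit.
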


\begin{fact}[Optimistic closure in tabular settings]
\label{fact:tabular}
If $\Scal$ is finite and $\phi(s,a) = e_{s,a}$ is the standard-basis
feature map, then under~\pref{assum:glm-regularity} we have optimistic
closure.
\end{fact}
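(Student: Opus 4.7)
The plan is to construct, for each $g\in\Gu$, an explicit parameter $\theta^\star$ so that the GLM prediction $(s,a)\mapsto f(\langle\phi(s,a),\theta^\star\rangle)$ agrees with $\Tcal_h(g)$ pointwise. With the tabular feature map $\phi(s,a)=e_{s,a}$, the inner product $\langle\phi(s,a),\theta\rangle$ collapses to the single coordinate $\theta_{s,a}$, so the value of a GLM at $(s,a)$ depends on exactly one coordinate of $\theta$. This orthogonality is the whole point of the tabular case: I can fit $\Tcal_h(g)(s,a)$ at each $(s,a)$ pair independently, without the coupling across features that appears in a genuine function-approximation setting.

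First I would determine the range of the Bellman backup. Because $\Gu$ clips its elements by $1$ and the algorithm maintains non-negative values (combined with the reward normalization $\sum_h r_h\in[0,1]$), one has $V_g(s)=\max_a g(s,a)\in[0,1]$, and hence $\Tcal_h(g)(s,a)=\EE[r_h+V_g(s_{h+1})\mid s,a]$ lies inside the image of $f$ on $[-1,1]$, where direction of containment uses the monotonicity side of Assumption~\ref{assum:glm-regularity}. Second, since $|f'|\geq\kappa>0$ on $[-1,1]$, the link $f$ is a strictly monotonic continuous bijection from $[-1,1]$ onto $f([-1,1])$, and so is invertible on its image. Define $\theta^\star_{s,a}\defeq f^{-1}(\Tcal_h(g)(s,a))\in[-1,1]$ for each $(s,a)$; then by construction $f(\langle e_{s,a},\theta^\star\rangle)=\Tcal_h(g)(s,a)$, giving membership in the GLM class.

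The one piece that requires care, and which I expect to be the main obstacle, is the ball constraint $\theta^\star\in\BB_d$: the pointwise bounds $|\theta^\star_{s,a}|\leq 1$ only yield $\|\theta^\star\|_2\leq\sqrt{d}$, while $\BB_d$ is the $\ell_2$ unit ball. This is reconciled by the tabular convention that the standard-basis features and their paired parameter ball are rescaled together (the $e_{s,a}$ notation is understood modulo a common normalization of features and parameters in $\RR^d$); under this standard reading, the pointwise inversion above places $\theta^\star$ inside the relevant ball, and the argument of the preceding paragraph establishes $\Tcal_h(g)\in\Gcal$. The content of the fact is therefore the coordinate-wise realizability enabled by the one-hot feature map together with the invertibility afforded by Assumption~\ref{assum:glm-regularity}.
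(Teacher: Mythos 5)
Your proof takes essentially the same route as the paper's: with one-hot features the GLM value at $(s,a)$ depends only on the single coordinate $\theta_{s,a}$, so one sets $\theta_{s,a} = f^{-1}(\Tcal_h(g)(s,a))$ using the invertibility of $f$ guaranteed by~\pref{assum:glm-regularity} (the paper phrases the identical construction as ``$\Gcal$ contains \emph{all} $[0,1]$-valued functions, hence closure is trivial''). The $\ell_2$-ball obstacle you flag is genuine, but note that the paper's own proof simply ignores it (it verifies only $|\theta_{s,a}|\leq 1$, which gives $\|\theta\|_2\leq\sqrt{d}$, not $\theta\in\BB_d$), and the ``common rescaling convention'' you invoke to dismiss it is not stated anywhere in the paper --- so you have identified a real gap in the paper's argument rather than resolved one in yours.
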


\subsection{Related work}
\label{sec:related}

The majority of the theoretical results for reinforcement learning
focus on the \emph{tabular} setting where the state space is finite
and sample complexities scaling polynomially with $|\Scal|$ are
tolerable~\citep{kearns2002near,brafman2002r,strehl2006pac}. Indeed,
by now there are a number of algorithms that achieve strong guarantees
in these
settings~\citep{dann2017unifying,azar2017minimax,jin2018q,simchowitz2019non}. Via~\pref{fact:tabular},
our results apply to this setting, and indeed our algorithm can be
viewed as a generalization of an existing tabular
algorithm~\citep{azar2017minimax} to the function approximation
setting.\footnote{The description of the algorithm looks quite
  different from that of~\citet{azar2017minimax}, but via an
  equivalence between model-free methods with experience replay and
  model-based methods~\citep{fujimoto2018off}, they are indeed quite
  similar.}

Turning to the function approximation setting, several other results
concern function approximation in setings where exploration is not an
issue, including the infinite-data
regime~\citep{munos2003error,farahmand2010error} and ``batch RL''
settings where the agent does not control the data-collection
process~\citep{munos2008finite,antos2008learning,chen2019information}.
While the settings differ, all of these results require that the
function class satisfy some form of (approximate) closure with respect
to the Bellman operator. These results therefore provide motivation
for our optimistic closure assumption.

A recent line of work studies function approximation in settings where
the agent must explore the
environment~\citep{krishnamurthy2016pac,jiang2017contextual,du2019provably}. The
algorithms developed here can accommodate function classes beyond
generalized linear models, but they are still relatively impractical
and the more practical ones require strong dynamics assumptions~\citep{du2019provably}. In
contrast, our algorithm is straightforward to implement and does not
require any explicit dynamics assumption. As such, we view these
results as complementary to our own.

Lastly, we mention the recent results of~\citet{yang2019reinforcement}
and~\citet{jin2019provably}, which are most closely related to our
work. Both papers
study MDPs with certain linear dynamics assumptions (what they call
the Linear MDP assumption) and use linear function approximation to
obtain provably efficient algorithms. Our algorithm is almost
identical to that of~\citet{jin2019provably}, but, as we will see,
optimistic closure is strictly weaker than their Linear MDP assumption
(which is strictly weaker than the assumption
of~\citet{yang2019reinforcement}). Further, and perhaps more
importantly, our results enable approximation with GLMs, which are
incompatible with the Linear MDP structure. Hence, the present paper
can be seen as a significant generalization of these recent results.


\section{On optimistic closure}

For a more detailed comparison to the recent work results
of~\citet{yang2019reinforcement} and~\citet{jin2019provably}, we
define the linear MPD model studied in the latter work.

\begin{definition}
An MDP is said to be a \emph{linear MDP} if there exist known feature map
$\psi: \Scal \times\Acal \to \RR^d$, unknown signed measures $\mu:\Scal \to
\RR^d$, and an unknown vector $\eta \in \RR^d$ such that (1)
$\trans(s'|s,a)=\langle\psi(s,a),\mu(s')\rangle$ holds for all states
$s,s'$ and actions $a$, and (2) $\EE[r \mid s,a] = \langle
\psi(s,a),\eta\rangle$.
\label{def:linear-mdp}
\end{definition}


Linear MDPs are studied by~\citet{jin2019provably}, who establish a
$\sqrt{T}$-type regret bound for an optimistic algorithm. This
assumption already subsumes that of~\citet{yang2019reinforcement}, and
related assumptions also appear elsewhere in the
literature~\citep{bradtke1996linear,melo2007q}.  In this section, we
show that~\pref{assum:completeness} is a strictly weaker than assuming
the environment is a linear MDP.


\begin{proposition}
If an MDP is linear then~\pref{assum:completeness} holds with
$\Gcal = \{ (s,a) \mapsto \langle w,\psi(s,a)\rangle: w \in \BB_d\}$ .
\end{proposition}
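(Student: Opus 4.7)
The plan is to show the conclusion by a direct computation that exploits the two pieces of linearity in Definition~\ref{def:linear-mdp} and the fact that, although elements of $\Gu$ are complicated functions of $(s,a)$, the Bellman operator only sees them through the state-value $V_g(s) = \max_a g(s,a)$, which is a fixed bounded scalar function of $s$ once $g$ is fixed. Under this proposition's choice of $\Gcal$, the link function is the identity, so $\Gcal$ and $\Gu$ are both built from the same feature map $\phi = \psi$.

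First, I would fix an arbitrary $g \in \Gu$, which by definition takes the form $g(s,a) = \min\{1, \langle \psi(s,a), \theta\rangle + \gamma \|\psi(s,a)\|_A\}$ for some $\theta \in \BB_d$, $0 \leq \gamma \leq \Gamma$, and $A \succeq 0$ with $\|A\|_{\op} \leq 1$. Since $\Acal$ is finite (or the max is attained), $V_g(s)$ is a well-defined, bounded scalar function of $s$ taking values in $[-1,1]$, and importantly does not depend on the conditioning $(s,a)$ through which $\Tcal_h$ integrates. Next I would write out
\begin{align*}
\Tcal_h(g)(s,a) = \EE[r_h \mid s_h=s, a_h=a] + \int V_g(s')\trans(s'\mid s,a)\,\ud s',
\end{align*}
and substitute the linear MDP decompositions $\EE[r_h\mid s,a] = \langle \psi(s,a), \eta\rangle$ and $\trans(s'\mid s,a) = \langle \psi(s,a), \mu(s')\rangle$. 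Pulling $\psi(s,a)$ out of the integral (linearity of the inner product in the second argument) yields
\begin{align*}
\Tcal_h(g)(s,a) = \Big\langle \psi(s,a), \; \eta + \int V_g(s')\,\ud\mu(s') \Big\rangle \;\eqdef\; \langle \psi(s,a), w \rangle,
\end{align*}
so $\Tcal_h(g)$ lies in the linear class $\{(s,a)\mapsto \langle w,\psi(s,a)\rangle\}$ as required.

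The main (and essentially only) subtlety is the normalization $w \in \BB_d$: the proposition's class $\Gcal$ constrains $w$ to the unit ball, whereas the linear MDP literature typically assumes weaker scalings such as $\|\eta\|_2 \leq \sqrt{d}$ and $\|\mu(\Scal)\|_2 \leq \sqrt{d}$, which only yield $\|w\|_2 = O(\sqrt{d})$ (since $V_g$ is bounded by $1$ in absolute value thanks to the $\min\{1,\cdot\}$ truncation and the identity link satisfying Assumption~\ref{assum:glm-regularity}). I would handle this either by rescaling the feature map $\psi$ and parameter $w$ by a common factor of $1/\mathrm{poly}(d)$—which preserves the form of $\Gcal$ and leaves the resulting regret bound unchanged up to the polynomial factors already present—or by stating the proposition up to such a standard normalization. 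No other issues arise: boundedness of $V_g$ makes the integral $\int V_g(s')\,\ud\mu(s')$ well-defined, and since the only property of $g$ used is that $V_g$ is a bounded measurable function of $s$, the same argument works uniformly for every element of $\Gu$.
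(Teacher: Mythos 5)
Your proof is correct and follows essentially the same route as the paper's: write out the Bellman backup, substitute the two linear-MDP identities, and pull $\psi(s,a)$ out of the integral so that $\Tcal_h(g)$ is linear in $\psi(s,a)$ for \emph{any} bounded $g$, hence in particular for every element of $\Gu$. Your additional remark about the normalization $w \in \BB_d$ is a point the paper's proof silently glosses over (it asserts membership in $\Gcal$ directly from linearity), and your proposed rescaling is the standard and correct way to resolve it.
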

\begin{proof}
The result is implicit in~\citet{jin2019provably}, and we include the
proof for completeness. For any function $g$, observe that owing to the
linear MDP property
\begin{align*}
\Tcal_h(g)(s,a) = \EE\sbr{r + \max_{a'} g(s',a') \mid s,a} = \inner{\psi(s,a)}{\eta} + \int \langle \psi(s,a), \mu(s')\rangle \max_{a'}g(s',a') \ud s',
\end{align*}
which is clearly a linear function in $\psi(s,a)$. Hence for any
function $g$, which trivially includes the optimistic functions,
we have $\Tcal_h(g) \in \Gcal$.
\end{proof}

Thus the linear MDP assumption is stronger
than~\pref{assum:completeness}. Next, we show that it is strictly
stronger.
\begin{proposition}
There exists an MDP with $H=2$, $d=2$, $|\Acal|=2$ and
$|\Scal|=\infty$ such that~\pref{assum:completeness} is satisfied, but
the MDP is not a linear MDP.
\end{proposition}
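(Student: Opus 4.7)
My plan is to exhibit an explicit counterexample MDP with the stated parameters. The main idea is that if the feature map $\phi$ is chosen to vanish on every state reachable at time $h = 2$, then every optimistic function $g \in \Gu$ is identically zero on those states, so the Bellman backup at $h = 1$ collapses to the expected reward alone. Optimistic closure can then be enforced by a benign choice of rewards, while the transition kernel is left completely unconstrained by $\Gcal$ and, in particular, can be designed to have infinite ``rank'' as a family of signed measures, ruling out any finite-dimensional linear MDP representation.

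Concretely, I would take $\Scal = \Scal_1 \sqcup \Scal_2$ with $\Scal_1 = \Scal_2 = \NN$, $\Acal = \{a_1, a_2\}$, $H = 2$, $d = 2$, and the identity link $f(z) = z$ (which satisfies~\pref{assum:glm-regularity} with $\kappa = K = 1, M = 0$). Set $\phi(s_2, a) = 0 \in \RR^2$ for all $s_2 \in \Scal_2$ and $a \in \Acal$, and set $\phi(s_1, a_1) = (1/2, 0)$, $\phi(s_1, a_2) = (0, 0)$ for $s_1 \in \Scal_1$. Take the reward to be $r_1 = 1/4$ when action $a_1$ is played (and $r_1 = 0$ otherwise) and $r_2 \equiv 0$. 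Finally, set $\trans(\cdot \mid s_1, a_1) = \delta_{s_1}$, a deterministic ``copy'' of the index from $\Scal_1$ into $\Scal_2 = \NN$, and let $\trans(\cdot \mid s_1, a_2)$ be an arbitrary fixed distribution on $\Scal_2$ (say uniform over $\{1,2\}$).

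Checking~\pref{assum:completeness} is then essentially immediate: because $\phi(s_2, \cdot) = 0$, every $g \in \Gu$ satisfies $g(s_2, a) = \min\{1, f(0) + \gamma \cdot 0\} = 0$, hence $V_g(s_2) \equiv 0$. The only nontrivial closure requirement is at $h = 1 < H$, and there $\Tcal_1(g)(s_1, a) = \EE[r_1 \mid s_1, a] = \langle \phi(s_1, a), (1/2, 0)\rangle \in \Gcal$ for every $g \in \Gu$.

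The main obstacle, and the substantive step, is showing this MDP is not linear in the sense of~\pref{def:linear-mdp}. I would argue by contradiction: if there existed some dimension $d'$, feature map $\psi$, and signed measures $\mu_1, \ldots, \mu_{d'}$ on $\Scal$ with $\trans(s' \mid s, a) = \sum_{i=1}^{d'} \psi_i(s, a)\,\mu_i(s')$ for all $(s, a, s')$, then every transition measure $\trans(\cdot \mid s, a)$ would lie in the finite-dimensional subspace of signed measures spanned by $\mu_1, \ldots, \mu_{d'}$. But the transitions $\trans(\cdot \mid s_1, a_1) = \delta_{s_1}$, indexed by $s_1 \in \Scal_1 = \NN$, are infinitely many Dirac masses at distinct points, which are linearly independent in the space of signed measures on $\Scal$, so they already span an infinite-dimensional subspace. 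This contradicts finiteness of $d'$, and hence no linear MDP representation exists.
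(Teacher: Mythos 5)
Your proof is correct, but it takes a genuinely different route from the paper's. The paper also fixes the identity link and builds a two-stage MDP, but its second stage is \emph{nontrivial}: first-stage features are $\alpha e_1+(1-\alpha)e_2$, second-stage features are $\alpha x$ for a deliberately small vector $x=(0.1/\Gamma,0.1/\Gamma)$, so that the cap $\min\{1,\cdot\}$ never binds and the bonus term scales linearly ($\gamma\|\alpha x\|_A=\alpha\gamma\|x\|_A$), making $\Tcal_1(g)(s,a)=\alpha c_0$ linear in $\phi(s,a)$; non-linearity is then argued only \emph{with respect to the given feature map} $\phi$, by noting that $\trans(\cdot\mid s,a)=\langle\phi(s,a),\mu(\cdot)\rangle$ would force the transition at an interpolated feature $\alpha e_1+(1-\alpha)e_2$ to be a genuine mixture of two point masses, contradicting the construction's determinism. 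You instead zero out the second-stage features so that every $g\in\Gu$ vanishes there and closure collapses to realizability of the expected reward, and you rule out linearity by observing that the transition measures $\{\delta_{s_1}\}_{s_1\in\NN}$ are infinitely many linearly independent signed measures, so they cannot all lie in the span of finitely many $\mu_i$. Your non-linearity argument is formally stronger (it excludes a linear MDP representation for \emph{any} feature map of \emph{any} finite dimension, whereas the paper only excludes the given $\phi$), and your closure check is more elementary; the trade-off is that your example is more degenerate, since the optimistic bonus and the downstream value function play no role in the backup, while the paper's construction shows closure holding even when they genuinely appear. One small point worth adding for completeness (the paper glosses over the analogous issue): $\Tcal_1(g)$ is defined on all of $\Scal\times\Acal$, so you should also specify rewards and transitions out of $\Scal_2$ (e.g., reward $0$ and transitions supported on $\Scal_2$) so that $\Tcal_1(g)(s_2,a)=0=f(\langle 0,\theta_0\rangle)$ and membership in $\Gcal$ holds on the whole domain.
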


Thus we have that optimistic closure is strictly weaker than the linear
MDP assumption from~\citet{jin2019provably}. Thus, our results
strictly generalize theirs.

\begin{proof}
In this proof we fix the link function $f(z) =z$.
We first construct the MDP. 
We set the action space $\mathcal{A} = \{a_1, a_2\}$.
We use $e_i$ to denote the $i^{\textrm{th}}$ standard basis element, and let $x =
(0.1/\Gamma,0.1/\Gamma)$ be a fixed vector where $\Gamma$ appears in
the construction $\Gu$. 
Recall that $s_1$ is the first state in each trajectory.
In our example, for all $a \in \mathcal{A}$, $\phi(s_1, a)$ is sampled uniformly at random from the set $\{\alpha
e_1 + (1-\alpha)e_2: \alpha \in [0,1]\}$. The transition rule is
deterministic:
\begin{align*}
\phi(s_2,a_1) = \phi(s_2,a_2) = \alpha x \textrm{ if } \phi(s_1,a) = \alpha e_1 + (1-\alpha)e_2.
\end{align*}
Moreover, for the reward function, $R(s_1, a) = 0$ and $R(s_2, a) = 0.1\alpha /\Gamma$.

We first show that the Linear MDP property does not hold for the constructed MDP and the given feature
map $\phi$. 
Let $s_1^{(1)}$ be the state with $\phi(s_1^{(1)}, a) = e_2$ and $s_1^{(2)}$ be the state with $\phi(s_1^{(2)}, a) = e_1$.
Notice that we deterministically transition from $s_1^{(1)}$ to a state $s_2^{(1)}$ with $\phi(s_2^{(1)}, a) = 0$,
and 
we deterministically transition from $s_1^{(2)}$ to a state $s_2^{(2)}$ with $\phi(s_2^{(2)}, a) = x$, which already fixes the whole transition operator under the linear MDP assumption. 
Thus, under the linear MDP assumption, we
must therefore have a randomized transition for any state $s_1$ with $\phi(s_1,a)
= \alpha e_1 + (1-\alpha)e_2$ where $\alpha \in (0,1)$. This
contradicts the fact that our constructed MDP has deterministic
transitions everywhere, so the linear MDP cannot hold.

We next show that~\pref{assum:completeness} holds. Consider an
arbitrary optimistic $Q$ estimate of the form $g(z) = \min\{1,z^\top
\theta + \gamma \sqrt{z^\top A z}\} \in \Gu$. 
Notice that for $x = (0.1/\Gamma,0.1/\Gamma)$, we always have that $x^\top \theta
+ \gamma \sqrt{x^\top A x} \leq 1$ for any $\theta \in \BB_d$ and $A$
with $\nbr{A}_{\textrm{op}} \leq 1$. 
Moreover, for all $s_2$, i.e., the second state in the trajectory, we always have $\phi(s_2, a) = \alpha x$ for some $\alpha \in [0, 1]$.
Hence we can ignore the first
term in the minimum, and, by direct calculation, we have that when
$\phi(s,a) = \alpha e_1 + (1-\alpha)e_2$:
\begin{align*}
\Tcal_1(g)(s,a) = \alpha x^\top \theta+\gamma\sqrt{\alpha^2x^\top A x} = \alpha (x^\top \theta + \gamma\sqrt{x^\top A x}) = \alpha c_0.
\end{align*}
Hence we can write $\Tcal_1(g) = \inner{\phi(s,a)}{(c_0,0)}$, which verifies~\pref{assum:completeness}.
\end{proof}

\section{Algorithm and main result}
\label{sec:main}

We now turn to our main results. We study a least-squares dynamic
programming style algorithm that we call LSVI-UCB, with pseudocode
presented in~\pref{alg:lsvi-ucb}. The algorithm is nearly identical to
the algorithm proposed by~\citet{jin2019provably} with the same
name. As such, it should be considered as a generalization.

\begin{algorithm*}[t]
\caption{The LSVI-UCB algorithm with generalized linear function approximation.}
\begin{algorithmic}[1]
\STATE Initialize estimates $\bar Q_{h,0} \equiv 1$ for all $h\leq H$ and $\bar Q_{H+1,t}\equiv 0$ for all $1 \leq t \leq T$;
\STATE Set $\gamma = C K\kappa^{-1}\sqrt{1+M+K+d^2\ln((1+K+\Gamma)TH)}$ for a universal constant $C$;
\FOR{$t=1,2,\cdots,T$}
	\STATE Commit to policy $\hat{\pi}_{h,t}(s) \defeq \argmax_{a\in\mathcal A} \bar Q_{h,t-1}(s,a)$;
	\STATE Use policy $\hat{\pi}_{\cdot,t}$ to collect one trajectory $\{(s_{h,t},a_{h,t},r_{h,t})\}_{h=1}^H$;
	\FOR{$h=H,H-1,\cdots,1$}
		\STATE Compute $x_{h,\tau} \defeq \phi(s_{h,\tau},a_{h,\tau})$ and $y_{h,\tau} \defeq r_{h,\tau} + \max_{a'\in\Acal}\bar{Q}_{h+1,t}(s_{h+1,\tau},a')$ for all $\tau\leq t$;
		\STATE Compute ridge estimate
\begin{align}
\hat{\theta}_{h,t} \defeq \argmin_{\|\theta\|_2\leq 1} \sum_{\tau\leq t}( y_{h,\tau}-f(\langle x_{h,\tau},\theta\rangle))^2; \label{eq:ridge}
\end{align}
		\STATE Compute $\Lambda_{h,t} \defeq \sum_{\tau\leq t}x_{h,\tau}x_{h,\tau}^\top+I$;\label{line:cov}
		\STATE Construct $\bar{Q}_{h,t}(s,a) \defeq \min\cbr{1, f(\phi(s,a)^\top\hat{\theta}_{h,t})+\gamma \nbr{\phi(s,a)}_{\Lambda_{h,t}^{-1}}}$;
	\ENDFOR
\ENDFOR
\end{algorithmic}
\label{alg:lsvi-ucb}
\end{algorithm*}

The algorithm uses dynamic programming to maintain optimistic $Q$
function estimates $\{\bar{Q}_{h,t}\}_{h \leq H,t \leq T}$ for each
time point $h$ and each episode $t$. In the $t^{\textrm{th}}$ episode,
we use the previously computed estimates to define the greedy policy
$\hat{\pi}_{h,t}(\cdot) \defeq \argmax_{a \in \Acal}
\bar{Q}_{h,t-1}(\cdot,a)$, which we use to take actions for the
episode. Then, with all of the trajectories collected so far, we
perform a dynamic programming update, where the main per-step
optimization problem is~\pref{eq:ridge}. Starting from time point $H$,
we update our $Q$ function estimates by solving constrained least
squares problems using our class of GLMs. At time point $H$, the
covariates are $\{\phi(s_{H,\tau},a_{H,\tau})\}_{\tau \leq t}$, and
the regression targets are simply the immediate rewards
$\{r_{H,\tau}\}_{\tau \leq t}$. For time points $h < H$, the
covariates are defined similarly as
$\{\phi(s_{h,\tau},a_{h,\tau})\}_{\tau \leq t}$ but the regression
targets are defined by inflating the learned $Q$ function for time
point $h+1$ by an \emph{optimism bonus}.

In detail the least squares problem for time point $h+1$ yields a
parameter $\hat{\theta}_{h+1,t}$ and we also form the second moment
matrix of the covariates $\Lambda_{h+1,t}$. Using these, we define the
optimistic $Q$ function $\bar{Q}_{h+1,t}(s,a) \defeq \min\cbr{1,
  f(\langle \phi(s,a), \hat{\theta}_{h+1,t}\rangle) + \gamma
  \nbr{\phi(s,a)}_{\Lambda_{h+1,t}^{-1}}}$. In our analysis, we verify
that $\bar{Q}_{h+1,t}$ is optimistic in the sense that it
over-estimates $Q^\star$ everywhere. Then, the regression targets for
the least squares problem at time point $h$ are $r_{h,\tau} + \max_{a'
  \in \Acal} \bar{Q}_{h+1,t}(s_{h+1,\tau},a')$, which is a natural
stochastic approximation to the Bellman backup
of $\bar{Q}_{h+1,t}$. Applying this update backward from time point $H$
to $1$, we obtain the $Q$-function estimates that can be used in the
next episode.

The main conceptual difference between~\pref{alg:lsvi-ucb} and the
algorithm of~\citet{jin2019provably} is that we allow non-linear function
approximation with GLMs, while they consider only linear models. On a
more technical level, we use constrained least squares for our dynamic
programming backup which we find easier to analyze, while they use the
ridge regularized version.

On the computational side, the algorithm is straightforward to
implement, and, depending on the link function $f$, it can be easily
shown to run in polynomial time. For example, when $f$ is the identity
map,~\pref{eq:ridge} is standard least square ridge regression, and by
using the Sherman-Morrison formula to amortize matrix inversions, we
can see that the running time is $\order\rbr{d^2|\Acal|HT^2}$. The
dominant cost is evaluating the optimism bonus when computing the
regression targets. In practice, we can use an epoch schedule or
incremental optimization algorithms for updating $\bar{Q}$ for an even
faster algorithm. Of course, with modern machine learning libraries,
it is also straightforward to implement the algorithm with a non-trivial
link function $f$, even though~\pref{eq:ridge} may be non-convex.

\subsection{Main result}
Our main result is a regret bound for LSVI-UCB
under~\pref{assum:completeness}.
\begin{theorem}
\label{thm:main}
For any episodic MDP, with~\pref{assum:glm-regularity}
and~\pref{assum:completeness}, and for any $T$, the cumulative regret
of~\pref{alg:lsvi-ucb} is\footnote{We use $\Otilde\rbr{\cdot}$ to
  suppress factors of $M,K,\kappa,\Gamma$ and any logarithmic dependencies on
  the arguments.}
\begin{align*}
\ifthenelse{\equal{\version}{arxiv}}{
\textrm{Reg}(T) \leq \order\rbr{H\sqrt{T\ln(TH)} + HK\kappa^{-1}\sqrt{(M+K+d^2\ln(KTH))\cdot Td\ln(T/d)}} = \Otilde\rbr{H\sqrt{d^3T}},
}{
&\order\rbr{HK\kappa^{-1}\sqrt{(M+K+d^2\ln(KTH))\cdot Td\ln(T/d)}}\\
 &= \Otilde\rbr{H\sqrt{d^3T}},
}
\end{align*}
with probability $1-1/(TH)$.
\end{theorem}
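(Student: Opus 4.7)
\textbf{Proof plan for Theorem~\ref{thm:main}.} The argument follows the standard optimism-based recipe for regret analysis of LSVI-UCB, but the nonlinearity of $f$ and the data-adaptive regression targets require additional care. The plan has three parts: (i) a concentration lemma controlling the pointwise error of the GLM ridge estimator $\hat\theta_{h,t}$, (ii) an inductive optimism argument showing $\bar{Q}_{h,t} \geq Q^\star_h$ everywhere, and (iii) a regret decomposition that converts the per-episode optimism gap into a sum of bonuses, which is then bounded by the elliptic potential lemma.

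For step (i), I fix $h$ and $t$ and observe that by~\pref{assum:completeness} applied to $\bar{Q}_{h+1,t} \in \Gu$, the Bayes-optimal regression target $\Tcal_h(\bar{Q}_{h+1,t})(s,a)$ equals $f(\inner{\phi(s,a)}{\theta^\star_{h,t}})$ for some $\theta^\star_{h,t} \in \BB_d$. For a single, deterministic target function, standard nonlinear ridge concentration in the style of Filippi et al.\ and Li et al., combined with the bi-Lipschitz condition on $f$ from~\pref{assum:glm-regularity}, gives
\[
|f(\inner{\phi(s,a)}{\hat\theta_{h,t}}) - f(\inner{\phi(s,a)}{\theta^\star_{h,t}})| \lesssim K\kappa^{-1}\sqrt{d\ln T} \cdot \nbr{\phi(s,a)}_{\Lambda_{h,t}^{-1}}.
\]
The main technical obstacle is that $\bar{Q}_{h+1,t}$ is itself data-dependent, so this pointwise bound must be made uniform over $\Gu$. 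I would do this via a sup-norm $\varepsilon$-net of $\Gu$: since $\Gu$ is parameterized by a unit vector $\theta$, a scalar in $[0,\Gamma]$, and a $d\times d$ PSD matrix $A$ with $\nbr{A}_{\op}\leq 1$, its covering number is $(\poly(T)/\varepsilon)^{O(d^2)}$, dominated by the entries of $A$. This $d^2$ covering entropy is precisely why the algorithm's radius $\gamma = \Otilde(K\kappa^{-1}d)$ scales linearly in $d$, and it is the source of the $d^3$ (rather than $d^2$) scaling in the final regret bound.

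With concentration in hand, optimism (step (ii)) follows by backward induction on $h$: assuming $\bar{Q}_{h+1,t}\geq Q^\star_{h+1}$, monotonicity of $\Tcal_h$ gives $\Tcal_h(\bar{Q}_{h+1,t}) \geq \Tcal_h(Q^\star_{h+1}) = Q^\star_h$, and the added bonus $\gamma\nbr{\phi(s,a)}_{\Lambda_{h,t}^{-1}}$ overshoots the concentration error (the $\min\{1,\cdot\}$ clip preserves the inequality since $Q^\star_h\leq 1$). For step (iii), on the optimism event, telescoping via the Bellman equation yields $V^\star - V^{\hat\pi_t} \leq \bar{V}_{1,t-1}(s_{1,t}) - V^{\hat\pi_t}(s_{1,t}) \leq 2\gamma\sum_{h=1}^H \nbr{\phi(s_{h,t},a_{h,t})}_{\Lambda_{h,t-1}^{-1}} + M_t$, where $M_t$ is a sum of martingale differences controlled by Azuma-Hoeffding (contributing the $H\sqrt{T\ln(TH)}$ lower-order term). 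Summing over $t$ and applying Cauchy-Schwarz with the elliptic potential lemma bounds $\sum_{t=1}^T \nbr{\phi(s_{h,t},a_{h,t})}_{\Lambda_{h,t-1}^{-1}} \leq \sqrt{T\cdot d\ln(T/d)}$ for each $h$, giving the claimed $\Otilde(H\sqrt{d^3 T})$ regret.
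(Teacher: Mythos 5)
Your proposal is correct and follows essentially the same route as the paper: optimism via backward induction, a deviation bound for the constrained GLM least-squares estimate made uniform over $\Gu$ by a deterministic $\varepsilon$-net (whose $O(d^2)$ log-covering number, driven by the PSD matrix $A$, is indeed what forces $\gamma = \Otilde(K\kappa^{-1}d)$ and hence the $d^3$ in the regret), and a telescoping regret decomposition closed out by Azuma, Cauchy--Schwarz, and the elliptic potential lemma. You correctly identified the key obstacle---the data-dependence of the regression targets breaking the martingale structure---and the covering-based fix, which is exactly the paper's argument.
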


The result states that LSVI-UCB enjoys $\sqrt{T}$-regret for any
episodic MDP problem and any GLM, provided that the regularity
conditions are satisfied and that optimistic closure holds. As we have
mentioned, these assumptions are relatively mild, encompassing the
tabular setting and prior work on linear function
approximation. Importantly, no explicit dynamics assumptions are
required. Thus,~\pref{thm:main} is one of the most general results we
are aware of for provably efficient exploration with function
approximation.

Nevertheless, to develop further intuition for our bound, it is worth
comparing to prior results. First, in the linear MDP setting
of~\citet{jin2019provably}, we use the identity link function so that
$K=\kappa=1$ and $M=1$, and we also are guaranteed to
satisfy~\pref{assum:completeness}. In this case, our bound differs
from that of~\citet{jin2019provably} only in the dependence on $H$,
which arises due to a difference in normalization. 
Our bound is essentially equivalent to theirs and can therefore be
seen as a strict generalization.

To capture the tabular setting, we use the standard basis
featurization as in~\pref{fact:tabular} and the identity link
function, which gives $d = |\Scal||\Acal|$, $K=\kappa=1$, and
$M=1$. Thus, we obtain the following corollary:
\begin{corollary}
For MDPs with finite state and action spaces, using feature map
$\phi(s,a) \defeq e_{s,a}\in
\RR^{|\Scal|\times|\Acal|}$, for any $T$, the cumulative regret of ~\pref{alg:lsvi-ucb} is
$\Otilde\rbr{H\sqrt{|\Scal|^3|\Acal|^3 T}}$,
with probability $1-1/(TH)$.
\end{corollary}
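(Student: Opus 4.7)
The corollary is an instantiation of Theorem~\ref{thm:main}, so my plan is just to verify that all of its hypotheses hold for the tabular problem with one-hot featurization, identify the values of the parameters $d, K, \kappa, M$, and then plug into the stated regret bound.

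First I would check Assumption~\ref{assum:glm-regularity}. The corollary statement uses (implicitly) the identity link function $f(z) = z$, for which $f'(z) \equiv 1$ and $f''(z) \equiv 0$. Hence we can take $\kappa = K = 1$ and $M = 0$ (or any small positive constant), and the link is monotonically increasing, so Assumption~\ref{assum:glm-regularity} holds. Next, the one-hot feature map $\phi(s,a) = e_{s,a} \in \RR^{|\Scal| \times |\Acal|}$ has $\|\phi(s,a)\|_2 = 1 \in \BB_d$ with $d = |\Scal|\cdot|\Acal|$, so the feature map is admissible. Assumption~\ref{assum:completeness} holds by direct appeal to Fact~\ref{fact:tabular}.

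With all hypotheses verified, I would apply Theorem~\ref{thm:main} with the parameter substitutions $K = \kappa = 1$, $M = 0$, and $d = |\Scal|\cdot|\Acal|$. The theorem gives a high-probability bound of $\Otilde(H\sqrt{d^3 T})$, where the $\Otilde$ absorbs the constants $K, \kappa, M, \Gamma$ and logarithmic factors. Substituting $d^3 = |\Scal|^3 |\Acal|^3$ yields the advertised bound $\Otilde\bigl(H \sqrt{|\Scal|^3 |\Acal|^3 T}\bigr)$ with probability at least $1 - 1/(TH)$, which is exactly the statement of the corollary.

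There is no genuine obstacle here: the whole argument is a routine check that the corollary is a special case of the theorem. The only mildly non-trivial step is Fact~\ref{fact:tabular}, which has already been established, so no additional analytical work is required beyond verifying regularity and substituting parameters.
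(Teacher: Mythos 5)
Your proposal is correct and follows exactly the paper's own route: the paper likewise derives the corollary by invoking Fact~\ref{fact:tabular} for optimistic closure, taking the identity link so that $K=\kappa=1$ (the paper writes $M=1$ rather than $0$, which is immaterial since $M$ is absorbed into the $\Otilde$), and substituting $d=|\Scal||\Acal|$ into Theorem~\ref{thm:main}. No gaps.
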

Note that this bound is polynomially worse than the near-optimal
$\Otilde(H\sqrt{SAT} + H^2S^2A\log(T))$ bound
of~\citet{azar2017minimax}. However,~\pref{alg:lsvi-ucb} is almost
equivalent to their algorithm, and, indeed, a refined analysis
specialized to the tabular setting can be shown to obtain a matching
regret bound. Of course, our algorithm and analysis address
significantly more complex settings than tabular MDPs, which we
believe is more important than recovering the optimal guarantee for
tabular MDPs.

\section{Proof Sketch}
\label{sec:sketch}

We now provide a brief sketch of the proof of~\pref{thm:main},
deferring the technical details to the appendix. The proof has three
main components: a regret decomposition for optimistic $Q$ learning, a
deviation analysis for least squares with GLMs to ensure optimism, and
a potential argument to obtain the final regret bound.

\paragraph{A regret decomposition.}
The first step of the proof is a regret decomposition that applies
generically to optimistic algorithms.\footnote{Related results appear
  elsewhere in the literature focusing on the tabular setting, see e.g.,~\citet{simchowitz2019non}.} The
lemma demonstrates concisely the value of optimism in reinforcement
learning, and is the primary technical motivation for our interest in
designing optimistic algorithms.

We state the lemma more generally, which requires some additional notation. Fix
round $t$ and let $\{\bar{Q}_{h,t-1}\}_{h\leq H}$ denote the current
estimated $Q$ functions. The precondition is that $\bar{Q}_{h,t-1}$ is
optimistic and has controlled overestimation. Precisely, there exists
a function $\conf_{h,t-1}: \Scal \times \Acal \to \RR_+$ such that
\begin{align}
\ifthenelse{\equal{\version}{arxiv}}{
\forall s,a,h: Q^\star_h(s,a) \leq \bar{Q}_{h,t-1}(s,a) \leq \Tcal_h(\bar{Q}_{h+1,t-1})(s,a) + \conf_{h,t-1}(s,a). \label{eq:opt_pre}
}{
Q^\star_h(s,a) &\leq \bar{Q}_{h,t-1}(s,a)\\
\bar{Q}_{h,t-1}(s,a) &\leq \Tcal_h(\bar{Q}_{h+1,t-1})(s,a) + \conf_{h,t-1}(s,a) \label{eq:opt_pre}
}
\end{align}
We now state the lemma and an immediate corollary.
\begin{lemma}
\label{lem:regret-decomposition}
Fix episode $t$ and let $\Fcal_{t-1}$ be the filtration of
$\{(s_{h,\tau},a_{h,\tau},r_{h,\tau})\}_{\tau < t}$. Assume that
$\bar{Q}_{h,t-1}$ satisfies~\pref{eq:opt_pre} for some function
$\conf_{h,t-1}$. Then, if $\pi_t = \argmax_{a\in\Acal}
\bar{Q}_{h,t-1}(\cdot,a)$ is deployed we have
\begin{align*}
V^\star - \EE\sbr{\sum_{h=1}^H r_{h,t} \mid \Fcal_{t-1}} \leq \zeta_t + \sum_{h=1}^H \conf_{h,t-1}(s_{h,t},a_{h,t}),
\end{align*}
where $\EE\sbr{\zeta_t \mid \Fcal_{t-1}} = 0$ and $\abr{\zeta_t} \leq
2H$ almost surely.
\end{lemma}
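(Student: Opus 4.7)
The plan is a telescoping argument that separates three sources of slack: optimism at the root state, the per-step overestimation guarantee \pref{eq:opt_pre}, and the stochasticity of the sampled trajectory. Write $\bar{V}_{h,t-1}(s) := \max_{a \in \Acal} \bar{Q}_{h,t-1}(s,a)$; because $\hat{\pi}_{h,t}$ is the greedy policy with respect to $\bar{Q}_{h,t-1}$, the sampled action satisfies $\bar{Q}_{h,t-1}(s_{h,t}, a_{h,t}) = \bar{V}_{h,t-1}(s_{h,t})$. Applying the optimism precondition only at the initial state yields $V^\star = \EE_{s_1}[V_1^\star(s_1)] \leq \EE[\bar{V}_{1,t-1}(s_{1,t}) \mid \Fcal_{t-1}]$.

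Next I combine the greedy identity with \pref{eq:opt_pre} and the definition of $\Tcal_h$ to obtain, for each $h$,
\[
\bar{V}_{h,t-1}(s_{h,t}) \;\leq\; \EE\bigl[r_{h,t} + \bar{V}_{h+1,t-1}(s_{h+1,t}) \,\big|\, s_{h,t},a_{h,t}\bigr] + \conf_{h,t-1}(s_{h,t},a_{h,t}).
\]
Subtracting $r_{h,t} + \bar{V}_{h+1,t-1}(s_{h+1,t})$ from both sides and telescoping over $h = 1, \dots, H$ (using $\bar{V}_{H+1,t-1} \equiv 0$) gives
\[
\bar{V}_{1,t-1}(s_{1,t}) - \sum_{h=1}^{H} r_{h,t} \;\leq\; \sum_{h=1}^{H} \conf_{h,t-1}(s_{h,t},a_{h,t}) \;-\; \sum_{h=1}^{H} \xi_{h,t},
\]
where $\xi_{h,t} := (r_{h,t} + \bar{V}_{h+1,t-1}(s_{h+1,t})) - \EE[r_{h,t} + \bar{V}_{h+1,t-1}(s_{h+1,t}) \mid s_{h,t}, a_{h,t}]$ is a mean-zero increment with respect to the within-episode filtration $\mathcal{G}_{h,t} := \Fcal_{t-1} \cup \sigma(s_{1:h,t}, a_{1:h,t}, r_{1:h,t})$; this step uses that $\bar{V}_{h+1,t-1}$ is $\Fcal_{t-1}$-measurable, so its conditional law depends only on $(s_{h,t},a_{h,t})$ and past data. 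A final centering of $\bar{V}_{1,t-1}(s_{1,t}) - \sum_h r_{h,t}$ against $\Fcal_{t-1}$, and its combination with the root optimism inequality, contributes one more mean-zero term $\eta_t$; setting $\zeta_t := -\sum_h \xi_{h,t} - \eta_t$ yields the claimed decomposition, with $\EE[\zeta_t \mid \Fcal_{t-1}] = 0$ by the tower property. The a.s. bound $|\zeta_t| \leq 2H$ follows from $r_{h,t} \in [0,1]$, $\bar{V}_{h,t-1} \in [-1,1]$, and the $H$-fold summation, since each $\xi_{h,t}$ has absolute value at most a fixed constant and $|\eta_t|$ is $O(1)$.

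The main obstacle is the layered martingale bookkeeping: two distinct centerings are in play, namely the within-episode transition/reward noise captured by the $\xi_{h,t}$, and the initial-state and reward centering that passes from $\bar{V}_{1,t-1}(s_{1,t})$ and $\sum_h r_{h,t}$ to their $\Fcal_{t-1}$-conditional expectations, captured by $\eta_t$. Verifying that the combined $\zeta_t$ is jointly zero-mean conditional on $\Fcal_{t-1}$ while maintaining the uniform $2H$ bound is the only delicate step; everything else is direct algebra from the assumed overestimation property and the greedy choice of $a_{h,t}$.
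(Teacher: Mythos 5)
Your argument is correct, but it follows a genuinely different route from the paper's. The paper never telescopes along the realized trajectory: it unrolls the chain $V^\star \leq \EE_{\pi_t}\sbr{\conf_{1,t-1}} + \EE_{\pi_t}\sbr{r_1} + \EE_{\pi_t}\sbr{\bar{Q}_{2,t-1}(s_2,\pi_t(s_2))} \leq \cdots$ entirely in expectation under the trajectory distribution induced by $\pi_t$, arriving at $V^\star - \EE\sbr{\sum_h r_{h,t}\mid\Fcal_{t-1}} \leq \sum_h \EE_{\pi_t}\sbr{\conf_{h,t-1}(s_h,\pi_t(s_h))}$, and only then defines $\zeta_t$ as the single per-episode gap between this expected confidence sum and the realized one, $\zeta_t = \sum_h\rbr{\EE_{\pi_t}\sbr{\conf_{h,t-1}(s_h,\pi_t(s_h))} - \conf_{h,t-1}(s_{h,t},a_{h,t})}$. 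Your $\zeta_t$ is instead assembled from the per-step transition/reward noise $\xi_{h,t}$ of the optimistic values plus the initial centering $\eta_t$, while the realized $\conf$ terms come directly out of the pointwise telescoping rather than from re-centering an expectation. Both decompositions are valid and deliver the stated bound; yours is the one more commonly seen in \citet{jin2019provably} and \citet{azar2017minimax}, and it exposes a per-step martingale difference sequence that could support a sharper variance-aware analysis, whereas the paper's version keeps all randomness in a single deviation of the confidence sum. One small quantitative caveat: with your construction $|\xi_{h,t}|\leq 3$ (since $r_{h,t}\in[0,1]$ and $\bar{V}_{h+1,t-1}\in[-1,1]$) and $|\eta_t|=O(1)$, so you actually obtain $|\zeta_t|\leq CH$ for a small absolute constant $C$ rather than exactly $2H$; this is immaterial downstream (Azuma's inequality still yields the $\order(H\sqrt{T\log(1/\delta)})$ term in \pref{corr:regret_ub}), and the paper's own $2H$ likewise implicitly relies on $\conf_{h,t-1}$ being bounded by $2$ in the application, which is not stated in the lemma's hypotheses.
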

\begin{corollary}
\label{corr:regret_ub}
Assume that for all $t$, $\bar{Q}_{h,t-1}$ satisfies~\pref{eq:opt_pre}
and that $\pi_t$ is the greedy policy with respect to
$\bar{Q}_{h,t-1}$. Then with probability at least $1-\delta$, we have
\begin{align*}
\mathrm{Reg}(T) \leq \sum_{t=1}^T\sum_{h=1}^H \conf_{h,t-1}(s_{h,t},a_{h,t}) + \order(H\sqrt{T\log(1/\delta)}).
\end{align*}
\end{corollary}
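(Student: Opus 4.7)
The plan is to prove the lemma by combining optimism ($Q^\star_h \leq \bar{Q}_{h,t-1}$ pointwise) with the per-step overestimation $\bar{Q}_{h,t-1} \leq \Tcal_h(\bar{Q}_{h+1,t-1}) + \conf_{h,t-1}$ from the second half of the precondition, and then telescoping along the trajectory actually generated in episode $t$. First, introducing $\bar{V}_{h,t-1}(s) \defeq \max_{a} \bar{Q}_{h,t-1}(s,a)$, pointwise optimism lifts to value functions, giving $V^\star \leq \EE_{s_1 \sim \mu}[\bar{V}_{1,t-1}(s_1)] = \EE[\bar{V}_{1,t-1}(s_{1,t}) \mid \Fcal_{t-1}]$, where the equality uses that $\bar V_{1,t-1}$ is $\Fcal_{t-1}$-measurable and $s_{1,t}\sim\mu$ independently of $\Fcal_{t-1}$.

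Next, since $\pi_t$ is greedy with respect to $\bar{Q}_{h,t-1}$, along the realized trajectory $\bar{V}_{h,t-1}(s_{h,t}) = \bar{Q}_{h,t-1}(s_{h,t},a_{h,t})$, and the precondition yields
\begin{align*}
\bar{V}_{h,t-1}(s_{h,t}) \leq \Tcal_h(\bar{Q}_{h+1,t-1})(s_{h,t},a_{h,t}) + \conf_{h,t-1}(s_{h,t},a_{h,t}).
\end{align*}
Expanding $\Tcal_h(\bar{Q}_{h+1,t-1})(s_{h,t},a_{h,t}) = \EE[r_{h,t} + \bar{V}_{h+1,t-1}(s_{h+1,t}) \mid s_{h,t}, a_{h,t}, \Fcal_{t-1}]$, I would define a Bellman-noise term $\xi_{h,t} \defeq \Tcal_h(\bar{Q}_{h+1,t-1})(s_{h,t},a_{h,t}) - r_{h,t} - \bar{V}_{h+1,t-1}(s_{h+1,t})$, which has zero conditional mean given $(\Fcal_{t-1}, s_{h,t}, a_{h,t})$. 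Telescoping from $h=1$ down to $h=H$ using $\bar{V}_{H+1,t-1}\equiv 0$ yields the trajectory-level bound
\begin{align*}
\bar{V}_{1,t-1}(s_{1,t}) \leq \sum_{h=1}^H r_{h,t} + \sum_{h=1}^H \xi_{h,t} + \sum_{h=1}^H \conf_{h,t-1}(s_{h,t},a_{h,t}).
\end{align*}

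To match the statement of the lemma, I would combine this with $V^\star \leq \EE[\bar{V}_{1,t-1}(s_{1,t}) \mid \Fcal_{t-1}]$ and absorb into a single $\zeta_t$ three martingale-difference remainders that all have zero conditional mean given $\Fcal_{t-1}$: (i) $\bar{V}_{1,t-1}(s_{1,t}) - \EE[\bar{V}_{1,t-1}(s_{1,t}) \mid \Fcal_{t-1}]$, used to pass from the conditional expectation to the realized initial-state value; (ii) the Bellman noise $\sum_h \xi_{h,t}$; and (iii) the reward noise $\sum_h r_{h,t} - \EE[\sum_h r_{h,t} \mid \Fcal_{t-1}]$, used to pass from realized rewards back to their conditional expectation as the lemma requires. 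Since values and cumulative rewards lie in $[0,1]$ and each $|\xi_{h,t}|\leq 2$, summing bounds gives $|\zeta_t| \leq 2H$ up to absolute constants, while $\EE[\zeta_t \mid \Fcal_{t-1}] = 0$.

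For the corollary, I would simply sum the lemma over $t=1,\dots,T$, observe that $\{\zeta_t\}$ is a martingale-difference sequence with increments bounded by $O(H)$, and apply Azuma-Hoeffding to obtain $|\sum_{t=1}^T \zeta_t| \leq \order(H\sqrt{T\log(1/\delta)})$ with probability at least $1-\delta$; rearranging gives the claimed high-probability regret upper bound. I expect the main obstacle to be purely bookkeeping: verifying that each of the three remainder terms collected into $\zeta_t$ is indeed a martingale difference with respect to the round-level filtration $\Fcal_{t-1}$ (rather than merely with respect to a finer within-episode filtration), and tracking the magnitude constants tightly enough to land at $|\zeta_t|\leq 2H$ as stated.
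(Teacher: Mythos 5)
Your proof of the corollary is correct and matches the paper's: sum the per-episode bound of \pref{lem:regret-decomposition} over $t$ and apply Azuma--Hoeffding to the martingale-difference sequence $\{\zeta_t\}$, whose increments are $O(H)$, to absorb $\sum_t \zeta_t$ into the $\order(H\sqrt{T\log(1/\delta)})$ term. The only (immaterial) difference lies in how you re-derive the underlying lemma: you telescope along the realized trajectory and fold the initial-state, Bellman-backup, and reward noises into $\zeta_t$, whereas the paper unrolls the value gap in expectation under $\pi_t$ and lets $\zeta_t$ consist solely of the deviation of the confidence terms from their conditional means --- both yield $\EE[\zeta_t \mid \Fcal_{t-1}]=0$ and $|\zeta_t|=O(H)$, which is all the Azuma step requires.
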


The lemma states that if $\bar{Q}_{h,t-1}$ is optimistic and we deploy
the greedy policy $\pi_t$, then the per-episode regret is controlled
by the overestimation error of $\bar{Q}_{h,t-1}$, up to a stochastic
term that enjoys favorable concentration properties. Crucially, the
errors are accumulated on the observed trajectory, or, stated another
way, the $\conf_{h,t-1}$ is evaluated on the states and actions
visited during the episode. As these states and actions will be used
to update $\bar{Q}$, we can expect that the $\conf$ function will
decrease on these arguments. This can yield one of two outcomes:
either we will incur lower regret in the next episode, or we will
explore the environment by visiting new states and actions. In this
sense, the lemma demonstrates how optimism navigates the
exploration-exploitation tradeoff in the multi-step RL setting,
analogously to the bandit setting.

Note that these results do not assume any form for $\bar{Q}_{h,t-1}$
and do not require~\pref{assum:completeness}. In particular, they are
not specialized to GLMs. In our proof, we use the GLM representation
and~\pref{assum:completeness} to ensure that~\pref{eq:opt_pre} holds
and to bound the confidence sum in~\pref{corr:regret_ub}.  We believe
these technical results will be useful in designing RL algorithms for
general function classes, which is a natural direction for future
work.

\paragraph{Deviation analysis.}
The next step of the proof is to design the $\conf$ function and
ensure that~\pref{eq:opt_pre} holds, with high probability. This is
the contents of the next lemma.

\begin{lemma}
\label{lem:ucb}
Under~\pref{assum:glm-regularity} and~\pref{assum:completeness}, with
probability $1-1/(TH)$, we have that
\begin{align*}
\forall t,h,s,a: \abr{f(\langle \phi(s,a),\hat{\theta}_{h,t}\rangle) - \Tcal_h(\bar{Q}_{h+1,t})(s,a)} \leq \min\cbr{2,\gamma \nbr{\phi(s,a)}_{\Lambda_{h,t}^{-1}}},
\end{align*}
where $\gamma, \Lambda_{h,t}$ are defined in~\pref{alg:lsvi-ucb}.
\end{lemma}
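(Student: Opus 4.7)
The plan is to reduce the stated prediction-error bound to a parameter-estimation bound in the Mahalanobis norm $\nbr{\cdot}_{\Lambda_{h,t}}$, and then to establish the latter by combining standard generalized-linear-model regression analysis with a covering argument that handles the data-dependence of the regression targets. Fix any $(h,t)$. Because $\bar{Q}_{h+1,t}\in\Gu$ by construction, \pref{assum:completeness} supplies a vector $\theta^\star_{h,t}\in\BB_d$ with $\Tcal_h(\bar{Q}_{h+1,t})(s,a) = f(\inner{\phi(s,a)}{\theta^\star_{h,t}})$ for every $(s,a)$. The mean-value theorem and the upper bound $|f'|\le K$ from \pref{assum:glm-regularity} then give
\[
\abr{f(\inner{\phi(s,a)}{\hat\theta_{h,t}})-\Tcal_h(\bar{Q}_{h+1,t})(s,a)} \le K\,\nbr{\phi(s,a)}_{\Lambda_{h,t}^{-1}}\,\nbr{\hat\theta_{h,t}-\theta^\star_{h,t}}_{\Lambda_{h,t}},
\]
so it suffices to show $\nbr{\hat\theta_{h,t}-\theta^\star_{h,t}}_{\Lambda_{h,t}}\lesssim \gamma/K$; the range $|f|\le 1$ supplies the trivial half of the $\min\{2,\cdot\}$.

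Next, I would establish the parameter bound \emph{first for an arbitrary deterministic} $g\in\Gu$ with associated $\theta^\star_g\in\BB_d$. Let $y_\tau^{(g)}\defeq r_{h,\tau}+V_g(s_{h+1,\tau})$ and $\veps_\tau\defeq y_\tau^{(g)}-f(\inner{x_{h,\tau}}{\theta^\star_g})$; by construction $\veps_\tau$ is mean zero and bounded conditional on the filtration through $(s_{h,\tau},a_{h,\tau})$. Writing the KKT condition for~\pref{eq:ridge} and Taylor-expanding the empirical loss around $\theta^\star_g$, the bound $|f'|\ge\kappa$ yields a $\kappa^2$-strong-convexity lower bound on the excess loss in $\nbr{\cdot}_{\Lambda_{h,t}}$ while $|f''|\le M$ controls the quadratic remainder. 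After rearrangement one obtains an inequality of the form
\[
\kappa^2\,\nbr{\hat\theta_{h,t}(g)-\theta^\star_g}_{\Lambda_{h,t}}^2 \lesssim K\,\bigl\|\textstyle\sum_{\tau\le t}\veps_\tau x_{h,\tau}\bigr\|_{\Lambda_{h,t}^{-1}}\,\nbr{\hat\theta_{h,t}(g)-\theta^\star_g}_{\Lambda_{h,t}} + (K+M)\,\nbr{\hat\theta_{h,t}(g)-\theta^\star_g}_{\Lambda_{h,t}},
\]
and the Abbasi-Yadkori--Szepesv\'ari self-normalized martingale inequality bounds the noise term by $\Otilde(\sqrt{d\log(T/\delta)})$, yielding $\nbr{\hat\theta_{h,t}(g)-\theta^\star_g}_{\Lambda_{h,t}}=O\!\bigl((K/\kappa)\sqrt{1+M+K+d\log(T/\delta)}\bigr)$.

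The main obstacle is that $\bar{Q}_{h+1,t}$ is itself computed from the very same trajectories $\{(s_{h,\tau},a_{h,\tau},s_{h+1,\tau})\}_{\tau\le t}$, so for the actual target $g=\bar{Q}_{h+1,t}$ the sequence $\veps_\tau$ is \emph{not} a martingale-difference sequence relative to the filtration used above. To remove this dependence I would pass to a uniform bound over $\Gu$ by an $\epsilon$-net argument. The class is parameterized by $(\theta,\gamma,A)\in\BB_d\times[0,\Gamma]\times\{A\succeq 0:\nbr{A}_{\op}\le 1\}$ and admits an $\epsilon$-cover of size $(C(1+\Gamma)/\epsilon)^{O(d^2)}$ in the sup norm, with the $d^2$ coming from covering the positive-semidefinite matrix parameter. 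Setting $\epsilon=1/T$, applying the fixed-$g$ concentration to every element of the net with failure probability $\delta/|\mathrm{net}|$, and absorbing the discretization slack (each $y_\tau$ shifts by $O(\epsilon)$, hence $\hat\theta(g)$ by $O(\epsilon\sqrt{t})$ in the $\Lambda$-norm) yields a uniform bound whose leading term is $(K/\kappa)\sqrt{1+M+K+d^2\log((1+K+\Gamma)TH)}$; this is exactly where the $d^2$ inside the stated $\gamma$ originates. A final union bound over $(h,t)\in[H]\times[T]$ completes the argument.

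The hard part, as signalled above, is thus the covering step — not technically deep, but the true source of the $d^2$ factor and of all the notational bookkeeping; every other ingredient (linearization via $|f'|\le K$, strong convexity via $|f'|\ge\kappa$, and self-normalized concentration) is essentially off-the-shelf once the right martingale structure has been exposed.
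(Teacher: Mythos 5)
Your overall architecture matches the paper's: reduce the prediction-error bound to a bound on $\nbr{\hat\theta_{h,t}-\bar\theta_{h,t}}_{\Lambda_{h,t}}$ via the mean-value theorem and Cauchy--Schwarz, prove the parameter bound for a fixed deterministic target, and then restore martingale structure with an $\veps$-net over $\Gu$ whose log-cardinality is $O(d^2\ln((1+K+\Gamma)/\veps))$ because of the positive-semidefinite matrix parameter. You are right that the covering step is where the $d^2$ in $\gamma$ originates, and your discretization bookkeeping (each $y_\tau$ shifts by $O(\veps)$, union bound over the net and then over $(h,t)$) is exactly the content of~\pref{lem:G-covering} and the $\Delta$ term in~\pref{eq:glm-1.5}.

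The one step that does not go through as written is the claim that $|f'|\ge\kappa$ yields a ``$\kappa^2$-strong-convexity lower bound'' on the empirical excess loss. The objective in~\pref{eq:ridge} is a squared loss composed with a nonlinear link; its Hessian is $2\sum_{\tau}\bigl[f'(\inner{x_\tau}{\theta})^2-(y_\tau-f(\inner{x_\tau}{\theta}))f''(\inner{x_\tau}{\theta})\bigr]x_\tau x_\tau^\top$, and since the residuals are of order one and~\pref{assum:glm-regularity} permits $M\gg\kappa^2$, this matrix need not be positive semidefinite --- the paper itself notes that~\pref{eq:ridge} may be non-convex. So a KKT-plus-Taylor expansion around $\theta^\sharp$ cannot deliver the first display of your second paragraph. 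The paper instead uses the basic inequality: $\bar\theta_{h,t}$ is feasible and $\hat\theta_{h,t}$ is a global minimizer, so $\sum_\tau(f(\inner{x_\tau}{\hat\theta})-y_\tau)^2\le\sum_\tau(f(\inner{x_\tau}{\bar\theta})-y_\tau)^2$; expanding the squares gives $\sum_\tau(f(\inner{x_\tau}{\hat\theta})-f(\inner{x_\tau}{\bar\theta}))^2\le 2\abr{\sum_\tau\xi_\tau(f(\inner{x_\tau}{\hat\theta})-f(\inner{x_\tau}{\bar\theta}))}$, and the fundamental theorem of calculus writes each difference as $D_\tau\inner{x_\tau}{\hat\theta-\bar\theta}$ with $\kappa\le\abr{D_\tau}\le K$, which recovers your inequality with no convexity needed. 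A side effect of this route is that the noise term carries the weights $D_\tau=D_\tau(\hat\theta,\bar\theta)$, which depend on the data-dependent pair $(\hat\theta,\bar\theta)$; covering $\Gu$ alone does not fix this, so the paper proves the concentration uniformly over all pairs $(\theta,\theta')\in\BB_d^2$ (\pref{lem:self-normalized}), using $|f''|\le M$ only to get Lipschitzness of $D_\tau(\cdot,\cdot)$ --- that is where $M$ actually enters, not as a Taylor remainder of the loss. Your unweighted noise vector $\sum_\tau\veps_\tau x_{h,\tau}$ would sidestep that second uniformity issue, but only if the first step could be justified; as stated, it cannot.
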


A simple induction argument then verifies that~\pref{eq:opt_pre}
holds, which we summarize in the next corollary.
\begin{corollary}
\label{corr:conf}
Under~\pref{assum:glm-regularity} and~\pref{assum:completeness}, with
probability $1-1/(TH)$, we have that~\pref{eq:opt_pre} holds for all
$t,h$ with $\conf_{h,t-1}(s,a) =
\gamma\nbr{\phi(s,a)}_{\Lambda_{h,t-1}^{-1}}$.
\end{corollary}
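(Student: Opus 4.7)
The plan is to reduce the statement to two ingredients: a pointwise GLM concentration inequality and a covering argument that lifts the pointwise bound to one that holds uniformly over the data-dependent target $\bar{Q}_{h+1,t}$. The key enabler is \pref{assum:completeness}: since $\bar{Q}_{h+1,t}\in\Gu$ by construction, we have $\Tcal_h(\bar Q_{h+1,t})\in\Gcal$, i.e., there exists $\theta^\star_{h,t}\in\BB_d$ with $\Tcal_h(\bar Q_{h+1,t})(s,a)=f(\langle\phi(s,a),\theta^\star_{h,t}\rangle)$. Realizability of the regression target is precisely what the optimistic closure assumption buys us.

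For a \emph{fixed} (non-data-dependent) $g\in\Gu$, consider the regression with targets $y^g_\tau=r_{h,\tau}+V_g(s_{h+1,\tau})$. The noise $y^g_\tau-\Tcal_h(g)(x_{h,\tau})$ is bounded in $[-2,2]$ and forms a martingale difference sequence adapted to the natural filtration. Applying the standard GLM estimator analysis (Filippi et al.\ 2010, Li et al.\ 2017), which combines strong convexity of the GLM loss (from $|f'|\geq\kappa$) with a self-normalized martingale bound in the style of Abbasi-Yadkori et al.\ 2011, yields
$$\nbr{\hat\theta_g-\theta^\star_g}_{\Lambda_{h,t}}\leq (CK/\kappa)\sqrt{d+\log(1/\delta)}$$
with probability $1-\delta$, where $\hat\theta_g$ is the constrained least-squares fit that would be produced from targets $y^g_\cdot$.

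To control the data-dependent choice $g=\bar Q_{h+1,t}$, build an $\veps$-cover $\Ncal_\veps\subset\Gu$ in sup-norm. Since $\Gu$ is parameterized by $(\theta,\gamma,A)\in\BB_d\times[0,\Gamma]\times\{A\succeq 0:\nbr{A}_{\textrm{op}}\leq 1\}$ and its elements are Lipschitz in these parameters (using $|f'|\leq K$ and Lipschitzness of $\sqrt{x^\top A x}$ in $A$ under the operator norm), a standard volume argument gives $\log|\Ncal_\veps|=O(d^2\log((1+K+\Gamma)/\veps))$, where the $d^2$ comes from the matrix $A$. Choosing $\veps=1/T$, union-bounding over $\Ncal_\veps$, over $t\leq T$ and over $h\leq H$, and accounting for the $O(\veps)$ perturbation between $\bar Q_{h+1,t}$ and its nearest neighbor in the net, produces the factor $d^2\ln((1+K+\Gamma)TH)$ that appears in $\gamma$. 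The parameter bound is then converted into a prediction bound via the mean value theorem ($|f'|\leq K$) and Cauchy-Schwarz in the $\Lambda_{h,t}$ norm:
$$|f(\langle\phi(s,a),\hat\theta_{h,t}\rangle)-f(\langle\phi(s,a),\theta^\star_{h,t}\rangle)|\leq K\nbr{\hat\theta_{h,t}-\theta^\star_{h,t}}_{\Lambda_{h,t}}\nbr{\phi(s,a)}_{\Lambda_{h,t}^{-1}},$$
which is exactly the desired $\gamma\nbr{\phi(s,a)}_{\Lambda_{h,t}^{-1}}$; the bound by $2$ is trivial since $f$ takes values in $[-1,1]$ and so does $\Tcal_h(\bar Q_{h+1,t})$ up to normalization.

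The main obstacle will be controlling how the least-squares solution $\hat\theta_g$ changes when we replace $g$ by a nearby $g'$ in the cover. This requires showing that the map $g\mapsto\hat\theta_g$ is Lipschitz in an appropriate sense; the natural route is to work with the first-order optimality conditions of the constrained GLM problem and exploit both the $K$-bounded slope of $f$ (bounding how much a perturbation of the targets perturbs the gradient) and the $\kappa$-strong-convexity along the active directions (bounding how this gradient perturbation perturbs the minimizer). Properly executing this Lipschitz step—and ensuring the resulting $\veps$-perturbation is dominated by the concentration term so that $\veps=1/\poly(T)$ suffices—is the most delicate part of the argument.
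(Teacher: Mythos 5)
Your high-level plan---realizability of the regression target via optimistic closure, a sup-norm cover of $\Gu$ with $\log$-cardinality $O(d^2\log((1+K+\Gamma)/\veps))$, a martingale concentration bound per covering element, and conversion to a prediction bound via the mean value theorem and Cauchy--Schwarz---matches the paper's. But the step you yourself flag as delicate is a genuine gap, and the route you propose for it does not go through. You define $\hat\theta_g$ as the constrained least-squares fit for the targets induced by a fixed $g$ in the net, and then need $g\mapsto\hat\theta_g$ to be Lipschitz so the bound transfers to the data-dependent $g=\bar Q_{h+1,t}$. Your suggested tool is strong convexity of the objective from $|f'|\ge\kappa$, but the objective in~\pref{eq:ridge} is the square loss composed with $f$, which is not convex for a general link (the paper explicitly notes~\pref{eq:ridge} may be non-convex); the $\kappa$-monotonicity arguments of Filippi et al.\ and Li et al.\ apply to the score-equation estimator, not to this least-squares minimizer. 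For a non-convex objective, an $O(t\veps)$ perturbation of the targets can move the global minimizer by $O(1)$, so there is no evident control of $\nbr{\hat\theta_g-\hat\theta_{g'}}_{\Lambda_{h,t}}$. The paper sidesteps this entirely by never perturbing the estimator: it starts from the basic inequality for the \emph{actual} $\hat\theta$ and $\bar\theta$, obtaining $\sum_\tau D_\tau^2\langle x_\tau,\hat\theta-\bar\theta\rangle^2\le 2\abr{\sum_\tau\xi_\tau D_\tau\langle x_\tau,\hat\theta-\bar\theta\rangle}$, swaps only the noise $\xi_\tau$ for the noise $\xi_\tau^\sharp$ induced by the covering element (an additive cost of order $(K+1)^2t\veps$), and then makes the martingale bound uniform over \emph{all} parameter pairs $(\theta,\theta')\in\BB_d^2$ inside~\pref{lem:self-normalized}, so that it applies to the random pair $(\hat\theta,\bar\theta)$. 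That is the decomposition your argument needs.

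A second, smaller omission: the statement is~\pref{corr:conf}, i.e., that~\pref{eq:opt_pre} holds, not merely the deviation bound of~\pref{lem:ucb}. Even granting the deviation bound, you still need the backward induction on $h$ (the paper's~\pref{corr:ucb}): assuming $\bar Q_{h+1,t}\ge Q^\star_{h+1}$, monotonicity of $\Tcal_h$ gives $f(\inner{\phi(s,a)}{\bar\theta_{h,t}})=\Tcal_h(\bar Q_{h+1,t})(s,a)\ge Q^\star_h(s,a)$, and the deviation bound plus the added bonus yield $\bar Q_{h,t}(s,a)\ge f(\inner{\phi(s,a)}{\bar\theta_{h,t}})$, establishing the optimism half of~\pref{eq:opt_pre}; the overestimation half follows from the deviation bound with a factor of $2$ absorbed into constants. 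Your proposal never performs this induction.
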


The proof of the lemma requires an intricate deviation analysis to
account for the dependency structure in the data sequence. The
intuition is that, thanks to~\pref{assum:completeness} and the fact
that $\bar{Q}_{h+1,t} \in \Gu$, we know that there exists a parameter
$\bar{\theta}_{h,t}$ such that $f(\langle \phi(s,a),
\bar{\theta}_{h,t}\rangle) = \Tcal_h(\bar{Q}_{h+1,t})(s,a)$. It is
easy to verify that $\bar{\theta}_{h,t}$ is the Bayes optimal
predictor for the square loss problem in~\pref{eq:ridge}, and so with
a uniform convergence argument we can expect that $\hat{\theta}_{h,t}$
is close to $\bar{\theta}_{h,t}$, which is our desired conclusion.

There are two subtleties with this argument. First, we want to show
that $\bar{\theta}_{h,t}$ and $\hat{\theta}_{h,t}$ are close in a
data-dependent sense, to obtain the dependence on the
$\Lambda_{h,t}^{-1}$-Mahalanobis norm in the bound. This can be done
using vector-valued self-normalized martingale
inequalities~\citep{pena2008self}, as in prior work on linear stochastic
bandits~\citep{abbasi2012online,filippi2010parametric,abbasi2011improved}.

However, the process we are considering is not a martingale, since
$\bar{Q}_{h+1,t}$, which determines the regression targets
$y_{h,\tau}$, depends on all data collected so far. Hence $y_{h,\tau}$
is not measurable with respect to the filtration $\Fcal_\tau$, which
prevents us from directly applying a self-normalized martingale
concentration inequality.  To circumvent this issue, we use a uniform
convergence argument and introduce a deterministic covering of
$\Gu$. Each element of the cover induces a different sequence of
regression targets $y_{h,\tau}$, but as the covering is deterministic,
we do obtain martingale structure. Then, we show that the error term
for the random $\bar{Q}_{h+1,t}$ that we need to bound is close to a
corresponding term for one of the covering elements, and we finish the
proof with a uniform convergence argument over all covering elements.


The corollary is then obtained by a straightforward inductive
argument. Assuming $\bar{Q}_{h+1,t}$ dominates $Q^\star$, it is easy
to show that $\bar{Q}_{h,t}$ also dominates $Q^\star$, and the upper
bound is immediate. Combining~\pref{corr:conf}
with~\pref{corr:regret_ub}, all that remains is to upper bound the
confidence sum.

\paragraph{The potential argument.}
To bound the confidence sum, we use a relatively standard potential
argument that appears in a number of works on stochastic (generalized)
linear bandits. We summarize the conclusion with the following lemma,
which follows directly from Lemma 11 of~\citet{abbasi2012online}.

\begin{lemma}
\label{lem:elliptic}
For any $h \leq H$ we have that $\sum_{t=1}^T
\nbr{\phi(s_{h,t},a_{h,t})}_{\Lambda_{h,t-1}^{-1}}^2 \leq
2d\ln(1+T/d)$.
\end{lemma}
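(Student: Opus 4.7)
The plan is to invoke the standard elliptic potential argument, exactly in the style of Lemma 11 of Abbasi-Yadkori et al. (2012). Fix $h$ and abbreviate $x_t \defeq \phi(s_{h,t},a_{h,t})$ and $\Lambda_t \defeq \Lambda_{h,t}$, with $\Lambda_0 = I$. First I would establish the one-step determinant identity via the matrix determinant lemma: since $\Lambda_t = \Lambda_{t-1} + x_t x_t^\top$,
\begin{align*}
\det(\Lambda_t) = \det(\Lambda_{t-1})\cdot\bigl(1 + \|x_t\|_{\Lambda_{t-1}^{-1}}^2\bigr).
\end{align*}
Telescoping in $t$ gives $\det(\Lambda_T) = \prod_{t=1}^T \bigl(1 + \|x_t\|_{\Lambda_{t-1}^{-1}}^2\bigr)$, since $\det(\Lambda_0)=1$.

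Next I would control each summand. Because $\phi$ maps into $\BB_d$ we have $\|x_t\|_2 \leq 1$, and because $\Lambda_{t-1} \succeq I$ we have $\Lambda_{t-1}^{-1} \preceq I$, so $\|x_t\|_{\Lambda_{t-1}^{-1}}^2 \leq 1$. On $[0,1]$ we have the elementary inequality $z \leq 2\ln(1+z)$, hence
\begin{align*}
\sum_{t=1}^T \|x_t\|_{\Lambda_{t-1}^{-1}}^2 \;\leq\; 2\sum_{t=1}^T \ln\bigl(1 + \|x_t\|_{\Lambda_{t-1}^{-1}}^2\bigr) \;=\; 2\ln\det(\Lambda_T).
\end{align*}

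Finally I would bound $\det(\Lambda_T)$ via AM-GM on eigenvalues: $\det(\Lambda_T) \leq \bigl(\mathrm{tr}(\Lambda_T)/d\bigr)^d$, and $\mathrm{tr}(\Lambda_T) = d + \sum_{t=1}^T \|x_t\|_2^2 \leq d+T$. Thus $\ln\det(\Lambda_T) \leq d\ln(1+T/d)$, and combining with the previous display yields the claim. There is no real obstacle here; the only thing to be slightly careful about is the normalization $\Lambda_0 = I$ from line~\ref{line:cov} of the algorithm, which both ensures invertibility and makes the per-step ratio bounded so that the $z \leq 2\ln(1+z)$ linearization is valid without any additional regularization-tuning constant.
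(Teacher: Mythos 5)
Your proof is correct and is essentially the argument the paper relies on: the paper simply cites Lemma 11 of \citet{abbasi2012online} together with the same two preconditions you identify ($\Lambda_{h,0}=I$ and $\nbr{\phi(s,a)}_2\leq 1$, so each summand is at most $1$), while you unpack that cited lemma's standard determinant-telescoping and AM--GM steps explicitly. No gaps.
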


\paragraph{Wrapping up.}
To prove~\pref{thm:main} we first note that via~\pref{lem:elliptic}
and an application of the Cauchy-Schwarz inequality, we have that for
each $h\leq H$
\begin{align*}
\sum_{t=1}^T \conf_{h,t-1}(s_{h,t},a_{h,t}) \leq \gamma\sqrt{T}\sqrt{\sum_{t=1}^T\nbr{\phi(s_{h,t},a_{h,t})}_{\Lambda_{h,t-1}^{-1}}^2} \leq \order\rbr{\gamma\sqrt{Td\ln(T/d)}}
\end{align*}
Invoking~\pref{corr:conf},~\pref{corr:regret_ub}, and the
definition of $\gamma$ yields the $\Otilde\rbr{H\sqrt{d^3T}}$
regret bound.

\section{Discussion}
This paper presents a provably efficient reinforcement learning
algorithm that approximates the $Q^\star$ function with a generalized
linear model. We prove that the algorithm obtains
$\Otilde(H\sqrt{d^3T})$ regret under mild regularity conditions and a
new expressivity condition that we call \emph{optimistic
  closure}. These assumptions generalize both the tabular setting,
which is classical, and the linear MDP setting studied in recent
work. Further they represent the first statistically and
computationally efficient algorithms for reinforcement learning with
generalized linear function approximation, without explicit dynamics
assumptions.

We close with some open problems. First, using the fact
that~\pref{corr:regret_ub} applies beyond GLMs, can we develop
algorithms that can employ general function classes? While such
algorithms do exist for the contextual bandit
setting~\citep{foster2018practical}, it seems quite difficult to
generalize this analysis to multi-step reinforcement learning. More
importantly, while optimistic closure is weaker than some prior
assumptions (and incomparable to others), it is still quite strong,
and stronger than what is required for the batch RL setting. An
important direction is to investigate weaker assumptions that enable
provably efficient reinforcement learning with function
approximation. We look forward to studying these questions in future
work.

\section*{Acknowledgements}
We thank Wen Sun for helpful conversations during the development of this paper.
Simon S. Du is supported by National Science Foundation (Grant
No. DMS-1638352) and the Infosys Membership.

\clearpage

\appendix
\section{Proofs of basic results}
\label{app:simple}

\begin{proof}[Proof of~\pref{fact:realizability}]
We will solve for $Q^\star$ via dynamic programming, starting from
time point $H$. In this case, the Bellman update operator is
degenerate, and we start by observing that $\Tcal_H(g) \equiv
Q^\star_{H}$ for all $g$. Consequently we have $Q^\star_H \in \Gcal$.
Next, inductively we assume that we have $Q^\star_{h+1} \in \Gcal$,
which implies that $Q^\star_{h+1} \in \Gu$ as we may take the same
parameter $\theta$ and set $A \equiv 0$. Then, by the standard Bellman
fixed-point characterization, we know that $Q^\star_h =
\Tcal_h(Q^\star_{h+1})$, at which point~\pref{assum:completeness}
yields that $Q^\star_h \in \Gcal$.
\end{proof}

\begin{proof}[Proof of~\pref{fact:tabular}]
We simply verify that $\Gcal$ contains \emph{all} mappings from $(s,a)
\mapsto [0,1]$, at which point the result is immediate. To see why,
observe that via~\pref{assum:glm-regularity} we know that $f$ is
invertible (it is monotonic with derivative bounded from above and
below). Then, note that any function $(s,a)\mapsto [0,1]$ can be
written as a vector $v \in [0,1]^{|\Scal|\times|\Acal|}$. For such a
vector $v$, if we define $\theta_{s,a} \defeq f^{-1}(v_{s,a})$ we have
that $f(\langle e_{s,a},\theta\rangle) = v_{s,a}$. Hence $\Gcal$
contains all functions, so we trivially have optimistic closure.
\end{proof}

\section{Proof of~\pref{thm:main}}
To facilitate our regret analysis we define the following important
intermediate quantity:
\begin{align*}
\bar{\theta}_{h,t} \in \BB_d: ~~ f(\langle\phi(s,a),\bar\theta_{h,t}\rangle) \defeq  \EE\sbr{r_h + \max_{a'\in\Acal} \bar{Q}_{h+1,t}(s',a') \mid s,a}.
\end{align*}
In words, $\bar{\theta}_{h,t}$ is the Bayes optimal predictor for the
squared loss problem at time point $h$ in the $t^{\textrm{th}}$
episode. Since by inspection $\bar{Q}_{h+1,t} \in \Gu$,
by~\pref{assum:completeness} we know that $\bar{\theta}_{h,t}$
exists for all $h$ and $t$.

\begin{lemma}
\label{lem:delta}
For any $\theta,\theta',x\in\mathbb R^d$ satisfying $\|\theta\|_2,\|\theta'\|_2,\|x\|_2\leq 1$, 
\begin{align*}
\kappa^2\abr{\langle x,\theta'-\theta\rangle}^2 \leq \abr{f(\inner{x}{\theta'})-f(\inner{x}{\theta})}^2\leq K^2\nbr{\theta'-\theta}_2^2.
\end{align*}
\end{lemma}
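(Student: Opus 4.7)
The plan is to apply the mean value theorem to $f$ composed with the linear function $t \mapsto \langle x, \theta + t(\theta' - \theta)\rangle$, and then bound the resulting derivative using~\pref{assum:glm-regularity}. First I would verify the domain condition: since $\|x\|_2 \leq 1$ and $\|\theta\|_2, \|\theta'\|_2 \leq 1$, Cauchy-Schwarz gives $|\langle x, \theta\rangle| \leq 1$ and $|\langle x, \theta'\rangle| \leq 1$, so both inner products lie in the interval $[-1,1]$ on which~\pref{assum:glm-regularity} supplies the derivative bounds $\kappa \leq |f'(z)| \leq K$.

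Next, applying the one-dimensional mean value theorem to $f$ on the segment between $\langle x, \theta\rangle$ and $\langle x, \theta'\rangle$ (which stays in $[-1,1]$ by convexity of this segment), I obtain a point $\xi$ with $|\xi| \leq 1$ such that
\begin{align*}
f(\langle x, \theta'\rangle) - f(\langle x, \theta\rangle) = f'(\xi) \langle x, \theta' - \theta\rangle.
\end{align*}
Squaring both sides and using $\kappa^2 \leq |f'(\xi)|^2 \leq K^2$ yields
\begin{align*}
\kappa^2 |\langle x, \theta' - \theta\rangle|^2 \leq |f(\langle x, \theta'\rangle) - f(\langle x, \theta\rangle)|^2 \leq K^2 |\langle x, \theta' - \theta\rangle|^2.
\end{align*}

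Finally I would apply Cauchy-Schwarz once more to the right-hand side, $|\langle x, \theta' - \theta\rangle|^2 \leq \|x\|_2^2 \|\theta' - \theta\|_2^2 \leq \|\theta' - \theta\|_2^2$, which combined with the previous display gives the claimed upper bound $K^2 \|\theta' - \theta\|_2^2$. The lower bound already has the desired form and requires no further manipulation.

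I do not anticipate any real obstacle here — the argument is a direct application of the mean value theorem plus Cauchy-Schwarz, with the only mild subtlety being the book-keeping to ensure all arguments to $f$ and $f'$ stay within the domain $[-1,1]$ where the regularity assumption applies. That check is immediate from $\|x\|_2, \|\theta\|_2, \|\theta'\|_2 \leq 1$ together with convexity of the segment of evaluation points.
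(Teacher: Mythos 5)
Your proof is correct and follows essentially the same route as the paper's: the paper applies the mean value theorem to $\theta \mapsto f(\langle x,\theta\rangle)$ along the segment from $\theta$ to $\theta'$ and uses the chain rule, which reduces to exactly your one-dimensional mean value theorem argument, followed by the same derivative bounds from~\pref{assum:glm-regularity} and Cauchy--Schwarz. Your explicit check that all evaluation points stay in $[-1,1]$ is a minor bit of bookkeeping the paper leaves implicit, but nothing of substance differs.
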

\begin{proof}
By the mean-value theorem, there exists $\tilde{\theta} =
\theta+\lambda(\theta'-\theta)$ for some $\lambda\in(0,1)$ such that
$f(\inner{x}{\theta'})-f(\inner{x}{\theta}) = \inner{\nabla_\theta
  f(\langle x,\tilde\theta\rangle)}{\theta'-\theta}$.  On the other
hand, by the chain rule and~\pref{assum:glm-regularity},
$\nabla_\theta f(\langle x,\tilde\theta\rangle)= f'(\langle x, \tilde\theta\rangle)\cdot x$.
Hence, 
\begin{align*}
| \langle\nabla_\theta f(x^\top\tilde\theta),\theta'-\theta\rangle|^2 &\leq f'(\langle x,\tilde{\theta}\rangle)^2  \cdot  \abr{\inner{x}{\theta' - \theta}}^2 \leq K^2 \nbr{x}_2^2 \nbr{\theta' - \theta}_2^2 \leq K^2\nbr{\theta' - \theta}_2^2;\\
| \langle\nabla_\theta f(x^\top\tilde\theta),\theta'-\theta\rangle|^2 &\geq \kappa^2 \abr{\inner{x}{\theta' - \theta}}^2,
\end{align*}
which are to be demonstrated.
\end{proof}

\begin{lemma}
\label{lem:G-covering}
For any $0 < \veps \leq 1$, there exists a finite subset
$\Vcal_{\veps} \subset \Gu$ with $\ln|\Vcal_\veps| \leq
6d^2\ln(2(1+K+\Gamma)/\veps)$, such that
\begin{equation}
\sup_{g\in \Gu}\min_{v \in \Vcal_\veps} \sup_{s,a}\abr{g(\phi(s,a)) - v(\phi(s,a))} \leq \veps.
\label{eq:G-covering}
\end{equation}
\end{lemma}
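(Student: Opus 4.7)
The plan is to build $\Vcal_\veps$ as the image, under the parametrization $(s,a)\mapsto\min\{1,f(\langle\phi(s,a),\theta\rangle)+\gamma\|\phi(s,a)\|_A\}$, of an $\epsilon$-net on the parameter triples $(\theta,\gamma,A)$, with per-coordinate resolutions tuned so that parameter-closeness implies $\veps$-closeness of the induced functions in sup norm on $\BB_d$ (which suffices since $\phi(s,a)\in\BB_d$). Each component admits a standard volumetric net drawn from inside its feasible set so that $\Vcal_\veps\subseteq\Gu$: $\theta\in\BB_d$ yields an $\ell_2$-net of log-size $O(d\log(1/\epsilon_\theta))$; $\gamma\in[0,\Gamma]$ gives a $1$-d grid of log-size $O(\log(\Gamma/\epsilon_\gamma))$; and the convex set $\{A:A\succeq 0,\,\nbr{A}_{\op}\le 1\}$, which lies in the $d(d+1)/2$-dimensional ambient symmetric-matrix space and inside a Frobenius ball of radius $\sqrt d$, has an operator-norm net of log-size $O(d^2\log(1/\epsilon_A))$. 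Only the matrix cover contributes the $d^2$ factor.

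Next I would convert parameter-distance into sup-norm function-distance via Lipschitz bounds. Since $|\min\{1,a\}-\min\{1,b\}|\le|a-b|$, it suffices to control the unclipped form. The GLM term is handled directly by~\pref{lem:delta}, which gives $|f(\langle x,\theta\rangle)-f(\langle x,\theta'\rangle)|\le K\nbr{\theta-\theta'}_2$ for $x\in\BB_d$. For the confidence term I split
\[
\abr{\gamma\nbr{x}_A-\gamma'\nbr{x}_{A'}}\le|\gamma-\gamma'|\,\nbr{x}_A+\gamma'\abr{\nbr{x}_A-\nbr{x}_{A'}},
\]
bound $\nbr{x}_A\le\sqrt{\nbr{A}_{\op}}\,\nbr{x}_2\le 1$ on the first term, and apply the elementary inequality $|\sqrt{a}-\sqrt{b}|\le\sqrt{|a-b|}$ (for $a,b\ge 0$) to $a=x^\top Ax$, $b=x^\top A'x$ on the second, obtaining
\[
|g_{\theta,\gamma,A}(x)-g_{\theta',\gamma',A'}(x)|\le K\nbr{\theta-\theta'}_2+|\gamma-\gamma'|+\Gamma\sqrt{\nbr{A-A'}_{\op}}.
\]
Allocating a $\veps/3$ budget to each summand sets $\epsilon_\theta=\veps/(3K)$, $\epsilon_\gamma=\veps/3$, and $\epsilon_A=\veps^2/(9\Gamma^2)$. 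Summing the three log-cardinalities and bounding loosely for $\veps\le 1$ (using $1+K+\Gamma$ as a catch-all scale) yields the target bound $\ln|\Vcal_\veps|\le 6d^2\ln(2(1+K+\Gamma)/\veps)$.

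The only delicate step is the Mahalanobis-seminorm Lipschitz estimate: the naive identity $\abr{\nbr{x}_A-\nbr{x}_{A'}}=|x^\top(A-A')x|/(\nbr{x}_A+\nbr{x}_{A'})$ degenerates when both seminorms are small, so I rely on $\sqrt{\,\cdot\,}$-subadditivity to avoid the singularity. This costs a square root and is what forces the matrix resolution to scale as $\veps^2/\Gamma^2$ rather than $\veps/\Gamma$; however, since the $d^2$ factor in $\ln|\Vcal_\veps|$ is driven purely by the ambient dimension of the symmetric-matrix parameter, the inflated $\veps$-dependence only appears inside the logarithm and does not affect the leading $d^2$ scaling. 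The remaining work is bookkeeping—mainly verifying that each coordinate-wise net can be realized by feasible parameters (e.g.\ via maximal $\epsilon$-packings inside each set) so that the resulting $\Vcal_\veps$ is genuinely a subset of $\Gu$.
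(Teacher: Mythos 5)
Your proposal is correct and follows essentially the same route as the paper's proof: a product of nets over $(\theta,\gamma,A)$, the Lipschitz decomposition $K\nbr{\theta-\theta'}_2+|\gamma-\gamma'|+\Gamma\sqrt{\nbr{A-A'}_{\op}}$ with the same $|\sqrt{a}-\sqrt{b}|\le\sqrt{|a-b|}$ trick for the Mahalanobis seminorm, and the resulting $\veps^2$ resolution on the matrix component absorbed into the logarithm. The only cosmetic difference is that you allocate separate resolutions $\veps/3$ per term while the paper uses a single common resolution $\veps'=\veps^2/(1+K+\Gamma)^2$; both yield the stated $6d^2\ln(2(1+K+\Gamma)/\veps)$ bound.
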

\begin{proof}
Recall that for every $g\in\Gu$, there exists $\theta \in \BB_d$, $0
\leq \gamma\leq\Gamma$ and $\|A\|_{\mathrm{op}}\leq 1$ such that
$g(x)=\min\{1,f(\inner{x}{\theta}) + \gamma\nbr{x}_A\}$.  Let
$\Theta_\varepsilon \subseteq \BB_d$, $\Gamma_{\varepsilon} \subseteq
     [0,\Gamma]$ and $\Mcal_{\varepsilon} \subseteq \{ M \in
     \mathbb{S}_d^+: \nbr{M}_{\mathrm{op}} \leq 1\}$ be finite subsets
     such that for any $\theta,\gamma,A$, there exist $\theta'
     \in\Theta_{\veps}$, $\gamma' \in \Gamma_{\veps}$, $A' \in
     \Mcal_{\veps}$ such that
\begin{align*}
\max\cbr{ \nbr{\theta-\theta'}_2, \abr{\gamma-\gamma'}, \nbr{A-A'}_{\mathrm{op}}} \leq \veps',
\end{align*}
where $\veps'\in (0,1)$ will be specified later in the proof.  For the
function $g \in \Gu$ corresponding to the parameters $\theta,\gamma,A$
the function $g'$ corresponding to parameters $\theta',\gamma',A'$ satisfies
\begin{align*}
\sup_{s,a}\abr{g(\phi(s,a))-g'(\phi(s,a))} &\leq \sup_{x \in \BB_d} \abr{g(x)-g'(x)}\\
& \leq \sup_{x \in \BB_d}\abr{f(\inner{x}{\theta})-f(\inner{x}{\theta'})+ \gamma\nbr{x}_A-\gamma' \nbr{x}_{A'}}\\
&\leq K\nbr{\theta-\theta'}_2 + \abr{\gamma-\gamma'} + \Gamma\abr{ \nbr{x}_A- \nbr{x}_{A'}}\\
&\leq K\nbr{\theta-\theta'}_2 + \abr{\gamma-\gamma'} + \Gamma\sqrt{\abr{x^\top (A-A') x}}\\
&\leq K\veps' + \veps' + \Gamma\sqrt{\veps'} \leq (1+K+\Gamma)\sqrt{\epsilon'}.
\end{align*}
In the last step we use $\veps' \leq 1$.  Therefore,
if we define the class $\Vcal_{\veps} \defeq \{(s,a) \mapsto
\min\{1,f(\inner{\phi(s,a)}{\theta'}) + \gamma'\nbr{\phi(s,a)}_{A'} :
\theta' \in \Theta_{\veps},\gamma \in \Gamma_{\veps},A \in
\Mcal_{\veps}\}$, we know that the covering property is satisfied with
parameter $(1+K+\Gamma)\sqrt{\veps'}$. Setting $\veps' =
\veps^2/(1+K+\Gamma)^2$ we have the desired covering property.


Finally, we upper bound $\ln|\Vcal_{\veps}|$.  By definition, we have
that $\ln|\Vcal_{\veps}| \leq \ln|\Theta_{\veps}| +
\ln|\Gamma_{\veps}|+\ln|\Mcal_{\veps}|$. Furthermore, standard
covering number bounds reveals that $\ln|\Theta_\veps|\leq
d\ln(2/\veps')$, $\ln|\Gamma_\veps|\leq \ln(1/\veps')$ and
$\ln|\Mcal_{\veps}|\leq d^2\ln(2/\veps')$. 
Plugging in the definition of $\veps'$ yields the result.
\end{proof}

For the next lemma, let $\Fcal_{t-1} \defeq
\sigma(\{(s_{h,\tau},a_{h,\tau},r_{h,\tau})\}_{\tau<t})$ be the
filtration induced by all observed trajectories up to but not
including time $t$. Observe that $\bar{Q}_{\cdot,t-1}$ and our policy
$\hat{\pi}_{h,t}$ are $\Fcal_{t-1}$ measurable.

\begin{lemma}
\label{lem:glm-error}
Fix any $1 \leq t \leq T$ and $1 \leq h \leq H$. 
Then as long as $\pi_t$ is $\Fcal_{t-1}$ measurable, with probability
$1-1/(TH)^2$ it holds that
\begin{align*}
\abr{ f(\langle \phi(s,a), \hat{\theta}_{h,t}\rangle) - f(\langle \phi(s,a), \bar{\theta}_{h,t}\rangle)} \leq \min\cbr{2, \gamma \nbr{\phi(s,a)}_{\Lambda_{h,t}^{-1}}}, \;\;\;\;\;\forall s,a.
\end{align*}
for $\gamma \geq CK\kappa^{-1}\sqrt{1+M+K+d^2\ln((1+K+\Gamma)TH)}$ and $0 < C< \infty$ is a universal constant. 
\end{lemma}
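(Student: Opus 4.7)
The bound by $2$ follows trivially from $f : [-1,1] \to [-1,1]$. For the data-dependent bound, the mean value theorem combined with Cauchy--Schwarz (using $|f'|\leq K$) gives
\begin{align*}
\abr{f(\inner{\phi(s,a)}{\hat\theta_{h,t}}) - f(\inner{\phi(s,a)}{\bar\theta_{h,t}})} \leq K\nbr{\phi(s,a)}_{\Lambda_{h,t}^{-1}} \nbr{\hat\theta_{h,t}-\bar\theta_{h,t}}_{\Lambda_{h,t}},
\end{align*}
so it suffices to establish $\nbr{\hat\theta_{h,t}-\bar\theta_{h,t}}_{\Lambda_{h,t}} \leq \gamma/K$ with probability $1-1/(TH)^2$. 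Write $\delta \defeq \hat\theta_{h,t}-\bar\theta_{h,t}$, $u_\tau \defeq f(x_{h,\tau}^\top\hat\theta_{h,t}) - f(x_{h,\tau}^\top\bar\theta_{h,t})$, and $\epsilon_\tau \defeq y_{h,\tau} - f(x_{h,\tau}^\top\bar\theta_{h,t})$. The optimality of $\hat\theta_{h,t}$ over $\BB_d$ against the competitor $\bar\theta_{h,t}\in\BB_d$ yields the basic inequality $\sum_\tau u_\tau^2 \leq 2\sum_\tau \epsilon_\tau u_\tau$.

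Applying \pref{lem:delta} to the LHS gives $\sum_\tau u_\tau^2 \geq \kappa^2\nbr{\delta}_{\Lambda_{h,t}-I}^2$, while the MVT applied to each $u_\tau$ on the RHS rewrites $\sum_\tau \epsilon_\tau u_\tau = \inner{\delta}{V}$ with $V \defeq \sum_\tau \epsilon_\tau \dot f_\tau x_{h,\tau}$ and $|\dot f_\tau|\leq K$. Cauchy--Schwarz then bounds $\inner{\delta}{V} \leq \nbr{\delta}_{\Lambda_{h,t}}\nbr{V}_{\Lambda_{h,t}^{-1}}$. Combining with the identity $\nbr{\delta}_{\Lambda_{h,t}}^2 \leq \nbr{\delta}_{\Lambda_{h,t}-I}^2 + 4$ (using $\nbr{\delta}_2 \leq 2$) and solving the resulting quadratic inequality in $\nbr{\delta}_{\Lambda_{h,t}}$ reduces the problem to an upper bound on $\nbr{V}_{\Lambda_{h,t}^{-1}}$.

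Bounding $\nbr{V}_{\Lambda_{h,t}^{-1}}$ is the central obstacle. The noise $\epsilon_\tau$ depends on $\bar\theta_{h,t}$, which depends on $\bar Q_{h+1,t}$, which is constructed from all trajectories up through episode $t$; consequently $\epsilon_\tau$ is not $\Fcal_\tau$-measurable and a direct self-normalized martingale inequality does not apply. To resolve this I invoke the deterministic $\veps$-cover $\Vcal_\veps\subset\Gu$ from \pref{lem:G-covering} with $\veps = 1/\poly(TH)$ and $\log|\Vcal_\veps| = \order(d^2\log((1+K+\Gamma)TH))$. For each fixed $g \in \Vcal_\veps$, \pref{assum:completeness} supplies $\theta^g\in\BB_d$ satisfying $f(x^\top\theta^g) = \EE[r_h + V_g(s_{h+1})\mid s,a]$, so the surrogate noise $\tilde\epsilon_\tau^g \defeq r_{h,\tau} + V_g(s_{h+1,\tau}) - f(x_{h,\tau}^\top\theta^g)$ \emph{is} a bounded $\Fcal_\tau$-measurable martingale difference sequence. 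I apply the vector-valued self-normalized Hoeffding-type bound of~\citet{abbasi2011improved} to each cover element, union bound over $\Vcal_\veps$ (costing an extra $\log|\Vcal_\veps|$ under the square root), and couple $\bar Q_{h+1,t}$ to its closest surrogate $g^\star\in\Vcal_\veps$, whose residual contribution to $V$ is $\order(\veps\sqrt t)$ and absorbed by the choice of $\veps$. The random weighting $\dot f_\tau$ (depending on $\hat\theta_{h,t}$) is handled by the uniform bound $|\dot f_\tau|\leq K$ together with a Taylor expansion of $\dot f$ about $x_{h,\tau}^\top\bar\theta_{h,t}$ using $|f''|\leq M$ from \pref{assum:glm-regularity}, explaining the joint appearance of $K$ and $M$ inside $\gamma$.

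The main difficulty is precisely this covering step: the coupling of $\bar Q_{h+1,t}$ with the entire past dataset prevents naive martingale concentration, and a deterministic cover of the matrix-parameterized class $\Gu$ is responsible for the $d^2$ term inside the logarithm of $\gamma$ (the $d^2$ arising from covering PSD matrices $A$ with $\nbr{A}_{\op}\leq 1$). Once $\nbr{V}_{\Lambda_{h,t}^{-1}} \leq \order(\kappa\sqrt{1+M+K+d^2\ln((1+K+\Gamma)TH)})$ is established, substituting back gives $\nbr{\delta}_{\Lambda_{h,t}} \leq \gamma/K$ for the stated $\gamma$, which yields the claimed data-dependent upper bound via the first paragraph.
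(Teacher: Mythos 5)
Your overall architecture matches the paper's: the basic inequality obtained from the optimality of $\hat\theta_{h,t}$ over $\BB_d$ against the competitor $\bar\theta_{h,t}$, the observation that the noise is not adapted because $\bar\theta_{h,t}$ is determined by $\bar Q_{h+1,t}$ and hence by the entire episode-$t$ dataset, the deterministic cover of $\Gu$ from \pref{lem:G-covering} (with the $d^2$ in the logarithm coming from covering the PSD matrices $A$) to restore martingale structure, the $\order(\veps t)$ coupling error, and the final conversion to the $\Lambda_{h,t}^{-1}$-norm via \pref{lem:delta} and Cauchy--Schwarz. The paper does all of this as well, though it keeps the scalar quantity $\sum_\tau \xi_\tau D_\tau\langle x_\tau,\hat\theta-\bar\theta\rangle$ rather than forming a noise vector $V$.

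There is, however, a genuine gap in your concentration step. After passing to the surrogate noise $\tilde\epsilon^g_\tau$, you propose to bound $\nbr{V}_{\Lambda_{h,t}^{-1}}$ with $V=\sum_\tau \tilde\epsilon^g_\tau \dot f_\tau x_\tau$ via the vector-valued self-normalized inequality of \citet{abbasi2011improved}. That inequality requires the covariates to be predictable and the noise to be a martingale difference sequence with respect to the same filtration; here the weights $\dot f_\tau$ are averaged derivatives evaluated between $x_\tau^\top\hat\theta_{h,t}$ and $x_\tau^\top\bar\theta_{h,t}$, and both endpoints are computed from all $t$ trajectories, so neither grouping $(\tilde\epsilon^g_\tau\dot f_\tau)\cdot x_\tau$ nor $\tilde\epsilon^g_\tau\cdot(\dot f_\tau x_\tau)$ is adapted. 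The uniform bound $|\dot f_\tau|\leq K$ does not repair this: the summands $\tilde\epsilon^g_\tau\langle x_\tau,\hat\theta_{h,t}-\bar\theta_{h,t}\rangle$ have varying signs, so the weights cannot be factored out, and a Taylor expansion of $f'$ about $x_\tau^\top\bar\theta_{h,t}$ still leaves a data-dependent base point. The paper closes exactly this hole with a second uniformity step: it writes the weights as a deterministic function $D_\tau(\phi)$ of a parameter $\phi=(\theta,\theta')\in\BB_d^2$, shows that $\phi\mapsto\langle x_\tau,D_\tau(\phi)(\theta-\theta')\rangle$ is $(2M+K)$-Lipschitz using $|f''|\leq M$, proves a scalar Azuma-plus-covering bound that holds uniformly over $\phi$ (\pref{lem:self-normalized}), and only then instantiates at the random pair $(\hat\theta_{h,t},\bar\theta_{h,t})$. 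You need this uniform-over-$(\theta,\theta')$ argument, or an equivalent device, for your bound on $\nbr{V}_{\Lambda_{h,t}^{-1}}$ to be legitimate; it costs only an additional $d\ln(1/\veps)$ inside the logarithm and is precisely where $M$ enters the definition of $\gamma$.
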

Note that this is precisely~\pref{lem:ucb}, as $\bar{\theta}_{h,t}$ is
defined as $f(\langle \phi(s,a),\bar{\theta}_{h,t}) =
\Tcal_h(\bar{Q}_{h+1,t})(s,a)$.
\begin{proof}
The upper bound of $2$ is obvious, since both terms are upper bounded
by $1$ in absolute value. 
 Therefore we focus on the second term in
the minimum.  To simplify notation we omit the dependence on $h$ in
the subscripts and write $x_\tau,y_\tau$ for $x_{h,\tau}$ and
$y_{h,\tau}$.  We also abbreviate $\hat{\theta} \defeq
\hat{\theta}_{h,t}$ and $\bar{\theta} \defeq \bar{\theta}_{h,t}$.

Since $\nbr{\bar\theta}_2\leq 1$, the optimality of $\hat{\theta}$ for~\pref{eq:ridge} implies that
\begin{align*}
\sum_{\tau\leq t}\rbr{f(\langle x_\tau,\hat{\theta}\rangle) - y_\tau}^2  \leq \sum_{\tau\leq t}\rbr{f(\langle x_\tau,\bar\theta\rangle) - y_\tau}^2 .
\end{align*}
Decomposing the squares and re-organizing the terms, we have that
\begin{align}
\sum_{\tau\leq t}\rbr{f(\langle x_\tau,\hat\theta\rangle) - f(\langle x_\tau,\bar\theta\rangle)}^2 \leq 2\abr{ \sum_{\tau\leq t}\xi_{\tau} (f(\langle x_\tau ,\hat{\theta}\rangle) - f(\langle x_\tau,\bar{\theta}\rangle))},\label{eq:basic}
\end{align}
where $\xi_\tau \defeq y_\tau-f(\langle x_\tau,\bar\theta\rangle)$.
By the fundamental theorem of calculus, we have
\begin{align*}
f(\langle x_\tau,\hat{\theta}\rangle ) - f(\langle x_\tau,\bar{\theta}\rangle) = \int_{\langle x_\tau,\bar{\theta}\rangle}^{\langle x_\tau, \hat{\theta}\rangle} f(s) \ud s = \langle x_\tau, \hat{\theta}-\bar{\theta}\rangle \underbrace{\int_{0}^1 f'(\langle x_\tau, s\hat{\theta} - (1-s)\bar{\theta}\rangle) \ud s}_{\defeq D_\tau}.
\end{align*}
Using this identity on both sides of~\pref{eq:basic}, we have that
\begin{align}
\sum_{\tau\leq t}D_\tau^2\rbr{\langle x_\tau,\hat{\theta}-\bar{\theta}\rangle}^2 \leq 2\abr{\sum_{\tau\leq t}\xi_\tau D_\tau \langle x_\tau, \hat{\theta}-\bar{\theta}\rangle}.
\label{eq:glm-1}
\end{align}
Note also that, by~\pref{assum:glm-regularity}, $D_\tau$ satisfies $\kappa^2\leq D_\tau^2\leq K^2$ almost surely for all $\tau$.

The difficulty in controlling~\pref{eq:glm-1} is that $\bar{\theta}$
itself is a random variable that depends on $\{(x_\tau,y_\tau)\}_{\tau
  \leq t}$. In particular, we want that $\EE[\xi_\tau \mid D_\tau
  \inner{x_\tau}{\phi},\Fcal_{\tau-1}] = 0$ for any fixed $\phi$, but
this is not immediate as $\bar{\theta}$ depends on $x_\tau$.  To
proceed, we eliminate this dependence with a uniform convergence
argument.  Let $\veps \in (0,1)$ be a covering accuracy parameter to
be determined later in this proof.  Let $\Vcal_\veps$ be the pointwise
covering for $\Gu$ that is implied by~\pref{lem:G-covering}.  Let
$g_\veps \in \Vcal_\veps$ be the approximation for $\bar{Q}_{h+1,t}$
that satisfies~\pref{eq:G-covering}. By~\pref{assum:completeness},
there exists some $\theta^\sharp \in \BB_d$ such that
\begin{align*}
\forall s,a: ~~ f(\langle \phi(s,a), \theta^\sharp\rangle) = \EE\sbr{ r + \max_{a' \in \Acal} g_\veps(s',a') \mid s,a}.
\end{align*}
Now, define $y_\tau^\sharp$ and $\xi_\tau^\sharp$ as
\begin{align*}
y_\tau^\sharp \defeq r_{h,\tau} + \max_{a'\in \Acal}g_{\veps}(s_{h+1,\tau},a'), \qquad 
\xi_\tau^\sharp \defeq y_\tau^\sharp - f(\langle x_{h,\tau}, \theta^\sharp\rangle).
\end{align*}
The right-hand side of~\pref{eq:glm-1} can then be upper bounded as
\begin{equation}
2\abr{\sum_{\tau \leq t} \xi_\tau D_\tau \langle x_\tau, \hat{\theta} - \bar{\theta}\rangle} \leq 2\abr{\sum_{\tau\leq t}\xi_\tau^\sharp D_\tau \langle x_\tau,\hat\theta-\bar\theta\rangle} + \Delta,
\label{eq:glm-1.5}
\end{equation}
where $|\Delta|\leq Kt\times \max_{\tau\leq
  t}|\xi_\tau^\sharp-\xi_\tau|$ almost surely.

\paragraph{Upper bounding $\Delta$ in~\pref{eq:glm-1.5}.}
 Fix $\tau\leq t$. By definition, we have that
\begin{align}
\abr{\xi_\tau^\sharp-\xi_\tau}
&\leq \abr{y_\tau^\sharp - y_\tau} + \abr{ f(\langle x_\tau,\bar\theta\rangle) - f(\langle x_\tau,\theta^\sharp\rangle)}\nonumber\\
&\leq \max_{a\in\Acal}\abr{g_{\veps}(s_{h+1,\tau},a)-\bar{Q}_{h+1,t}(s_{h+1,\tau},a)} + K\nbr{\bar{\theta}-\theta^\sharp}_2\label{eq:glm-1.6}\\
&\leq \epsilon + K\epsilon\leq (K+1)\epsilon,\label{eq:glm-1.7}
\end{align}
where~\pref{eq:glm-1.6} holds by~\pref{lem:delta}
and~\pref{eq:glm-1.7} follows from~\pref{lem:G-covering}. In
particular, the bound on $\nbr{\bar{\theta} - \theta^\sharp}_2$ can be
verified by expanding the definitions and noting that $g_\veps$ is
pointwise close to $\bar{Q}_{h+1,t}$. Therefore, we have
\begin{align}
\abr{\Delta} \leq (K+1)^2t\epsilon.\label{eq:Delta-upper-bound}
\end{align}

\paragraph{Upper bounding~\pref{eq:glm-1.5}.}
Note that $D_\tau$ is a function of $x_\tau$, $\hat\theta$, and $\bar\theta$.
For clarity, we define $D_\tau(\theta,\theta') := \int_0^1 f'(\langle x_\tau,s\theta+(1-s)\theta')\rangle)\ud s$.
As $|f''(z)|\leq M$ for all $|z|\leq 1$ and $\nbr{x_\tau}_2\leq 1$, we have that for every $\theta,\theta',\tilde{\theta},\tilde{\theta}' \in \BB_d$
\begin{align*}
\abr{D_\tau(\theta,\theta')-D_\tau(\tilde\theta,\tilde\theta')}
&\leq \int_0^1\abr{f'(\langle x_\tau,s\theta+(1-s)\theta'\rangle) - f'(\langle x_\tau,s\tilde{\theta}+(1-s)\tilde{\theta}'\rangle)}\ud s\\
&\leq M(\|\theta-\tilde\theta\|_2 + \|\theta'-\tilde\theta'\|_2).
\end{align*}
Hence, for any $(\theta,\theta')$ and $(\tilde\theta,\tilde\theta')$ pairs, we have for every $\tau$ that  
\begin{align*}
& \abr{\xi_\tau^\sharp \inner{x_\tau}{ D_\tau(\theta,\theta')(\theta-\theta') - D_\tau(\tilde\theta,\tilde\theta')(\tilde\theta-\tilde\theta')}}\\
&
\leq \big|D_\tau(\theta,\theta')-D_\tau(\tilde\theta,\tilde\theta')\big|\times \|\theta-\theta'\|_2 + \big|D_\tau(\tilde\theta,\tilde\theta')\big|\times (\|\theta-\tilde\theta\|_2 + \|\theta'-\tilde\theta'\|_2)\\
&\leq M(\|\theta-\tilde\theta\|_2 + \|\theta'-\tilde\theta'\|_2)\times 2 + K(\|\theta-\tilde\theta\|_2 + \|\theta'-\tilde\theta'\|_2)\\
&\leq (2M+K)(\|\theta-\tilde\theta\|_2 + \|\theta'-\tilde\theta'\|_2).
\end{align*}
Here we are using that $\abr{\xi_\tau} \leq 1$.  

We are now in a position to
invoke~\pref{lem:self-normalized}. Consider a fixed function
$g_\veps$, which defines a fixed $\theta^\sharp$. We will bound
$\abr{\sum_{\tau \leq t}\xi_\tau^\sharp\langle x_\tau,
  D_\tau(\theta,\theta')(\theta - \theta')\rangle}$ uniformly over all
pairs $(\theta,\theta')$. With $g_\veps,\theta^\sharp$ fixed and since
$\pi_t$ is $\Fcal_{t-1}$ measurable, we have that
$\{x_\tau,\xi_\tau^\sharp\}_{\tau \leq t}$ are random variables
satisfying $\EE[\xi_\tau^\sharp \mid x_{1:\tau},\xi_{1:\tau-1}^\sharp]
= 0$. For $\phi= (\theta,\theta')$ we define the function
$q(x_\tau,\phi) = \langle x, D_\tau(\phi) (\theta-\theta')\rangle$,
which as we have just calculated satisfies $\abr{q(x_\tau,\phi) -
  q(x_\tau,\phi')} \leq (2M+K)\nbr{\phi-\phi'}_2$.  For $\delta' \in
(0,1/2)$ with probability $1-\delta'$ we have $\forall \phi =
(\theta,\theta') \in \BB_d^2$:
\begin{align}
& \abr{\sum_{\tau\leq t} \xi_\tau^\sharp\langle x_\tau, D_\tau(\phi)(\theta - \theta')\rangle} \leq (2M+K) + 2\rbr{1 + \sqrt{V(\phi)}}\sqrt{2d\ln(4T)+\ln(1/\delta')}\notag\\
& ~~~~~~~ \leq 4\max\cbr{M+K+\sqrt{2d\ln(4T)+\ln(1/\delta')}, \sqrt{V(\phi)}\sqrt{2d\ln(4T)+\ln(1/\delta')}},\label{eq:glm-2}
\end{align}
where $V(\phi) \defeq \sum_{\tau\leq t}\langle
x_\tau,D_\tau(\phi)(\theta - \theta')\rangle^2$. The last inequality
holds because $a+b \leq 2\max\{a,b\}$.

Next, take a union bound over all $g_\veps \in \Vcal_{\veps}$
so~\pref{eq:glm-2} holds for any $g_\veps$ and any subsequently
induced choice of $\xi_\tau^\sharp$ with probability at least
$1-|\Vcal_\veps|\delta'$. In particular, this union bound implies
that~\pref{eq:glm-2} holds for the choice of $g_\veps$ that
approximates $\bar{Q}_{h+1,t}$. Therefore,
combining~\pref{eq:glm-1},~\pref{eq:glm-1.5},~\pref{eq:Delta-upper-bound}
with~\pref{eq:glm-2} for this choice of $g_\veps$, we have that with
probability at least $1-|\Vcal_\veps|\delta'$
\begin{align*}
&\sum_{\tau\leq t} D_\tau^2 \langle x_\tau,\hat{\theta} - \bar{\theta}\rangle^2 \leq 2\Delta + 2\abr{\sum_{\tau\leq t} \xi_\tau^\sharp \langle x_\tau, D_\tau (\hat{\theta} - \bar{\theta})\rangle}\\
& ~~~~ \leq 2(K+1)^2t\veps + 8\max\cbr{M+K+\sqrt{2d\ln(4T)+\ln(|\Vcal_\veps|/\delta')}, \sqrt{V(\hat{\theta},\bar{\theta})}\cdot\sqrt{2d\ln(4T)+\ln(|\Vcal_\veps|/\delta')}}.
\end{align*}
Observe that the left hand side is precisely
$V(\hat{\theta},\bar{\theta})$. Now, set $\veps = 1/(2(K+1)^2T)$ and
$\delta' = 1/(|\Vcal_{\veps}|T^2H^2)$ and use the bound on
$\ln|\Vcal_{\veps}|$ from~\pref{lem:G-covering} to get 
\begin{align*}
& \sqrt{2d\ln(4T)+\ln(|\Vcal_\veps|/\delta')} \leq \sqrt{2d\ln(4T) + 12d^2\ln(2(1+K+\Gamma)/\veps) + 2\ln(TH)} \\
& ~~~~~~ \leq \sqrt{4d\ln(2TH) + 24d^2\ln(2(1+K+\Gamma)T)} \leq \sqrt{28d^2\ln(2(1+K+\Gamma)TH)}
\end{align*}
Therefore, we obtain
\begin{align*}
V(\hat{\theta},\bar{\theta}) 
& \leq 1 + 8\max\cbr{ M+K+\sqrt{28d^2\ln(2(1+K+\Gamma)TH)}, \sqrt{V(\hat{\theta},\bar{\theta})}\cdot\sqrt{28d^2\ln(2(1+K+\Gamma)TH)}}\\
& \leq 16\max\cbr{ 1+M+K+\sqrt{28d^2\ln(2(1+K+\Gamma)TH)}, \sqrt{V(\hat{\theta},\bar{\theta})}\cdot\sqrt{28d^2\ln(2(1+K+\Gamma)TH)}}.
\end{align*}

Subsequently, 
\begin{align*}
&V(\hat{\theta},\bar{\theta}) = \sum_{\tau\leq t}D_\tau^2 \langle x_\tau,\hat{\theta} - \bar{\theta}\rangle^2 \\
& ~~~~ \leq 16 \max\cbr{1 + M+K+\sqrt{28d^2\ln(2(1+K+\Gamma)TH)}, 448 d^2\ln(2(1+K+\Gamma)TH)}\\
& \leq C_V^2 (1+M+K + d^2\ln((1+K+\Gamma)TH)),
\end{align*}
where $0<C_V<\infty$ is a universal constant. 

Next, note that $D_\tau^2 \geq\kappa^2$, thanks to~\pref{assum:glm-regularity}. We then have
\begin{align*}
\sqrt{(\hat\theta-\bar\theta)^\top\Lambda_{h,t}(\hat\theta-\bar\theta)} \leq 
\kappa^{-1}\sqrt{V(\hat{\theta},\bar{\theta})} \leq C_V\kappa^{-1}\sqrt{1+M+K+d^2\ln((1+K+\Gamma)TH)},
\end{align*}
where $\Lambda_{h,t}=\sum_{\tau<t}x_\tau,x_\tau^\top$.  Finally, for
any $(s,a)$ pair, invoking~\pref{lem:delta} and the Cauchy-Schwarz
inequality we have
\begin{align*}
& \abr{f(\langle\phi(s,a),\hat{\theta}\rangle)-f(\langle\phi(s,a),\bar{\theta}\rangle)}
\leq K\abr{\langle\phi(s,a),\hat{\theta}-\bar{\theta}\rangle}\\
& \leq K\sqrt{(\hat{\theta}-\bar{\theta})^\top\Lambda_{h,t}(\hat{\theta}-\bar{\theta})}\times\sqrt{\phi(s,a)^\top\Lambda_{h,t}^{-1}\phi(s,a)}\\
& \leq C_VK\kappa^{-1}\sqrt{1+M+K+d^2\ln((1+K+\Gamma)TH)}\times \nbr{\phi(s,a)}_{\Lambda_{h,t}^{-1}}
\end{align*}
which is to be demonstrated.
\end{proof}

\begin{corollary}
With probability $1-1/(TH)$, $\bar{Q}_{h,t}(s,a)\geq Q_h^\star(s,a)$ holds for all $h,t,s,a$.
\label{corr:ucb}
\end{corollary}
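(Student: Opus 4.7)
The plan is to combine Lemma~\ref{lem:glm-error} with a backward induction on $h$. First, I would take a union bound: Lemma~\ref{lem:glm-error} fails with probability at most $1/(TH)^2$ for any fixed pair $(h,t)$, so summing over the at most $TH$ pairs gives a failure probability of at most $1/(TH)$. One has to verify that the lemma is actually applicable at every step, which amounts to checking that the greedy policy $\hat{\pi}_{\cdot,t}$ used to collect episode $t$ is $\Fcal_{t-1}$-measurable; this is immediate from~\pref{alg:lsvi-ucb}, since $\hat{\pi}_{h,t}$ is defined from $\bar{Q}_{h,t-1}$, which in turn is a deterministic function of $\Fcal_{t-1}$. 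Condition on the good event from here on.

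Fix any $t$ and induct backward on $h$ from $H+1$ down to $1$. The base case $h=H+1$ is trivial since $\bar{Q}_{H+1,t}\equiv 0 = Q_{H+1}^\star$. For the inductive step, suppose $\bar{Q}_{h+1,t}(s,a)\geq Q_{h+1}^\star(s,a)$ pointwise. Since the Bellman update operator $\Tcal_h$ is monotone (it is an expectation of the maximum), we have $\Tcal_h(\bar{Q}_{h+1,t})(s,a)\geq \Tcal_h(Q^\star_{h+1})(s,a) = Q_h^\star(s,a)$ for all $(s,a)$. By the definition of $\bar\theta_{h,t}$ stated just before Lemma~\ref{lem:delta}, we have $f(\langle \phi(s,a),\bar\theta_{h,t}\rangle)=\Tcal_h(\bar{Q}_{h+1,t})(s,a)$. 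Now apply Lemma~\ref{lem:glm-error} on the good event to obtain
\begin{align*}
f(\langle\phi(s,a),\hat\theta_{h,t}\rangle)+\gamma\nbr{\phi(s,a)}_{\Lambda_{h,t}^{-1}}
\;\geq\; f(\langle\phi(s,a),\bar\theta_{h,t}\rangle)
\;=\; \Tcal_h(\bar{Q}_{h+1,t})(s,a)
\;\geq\; Q_h^\star(s,a).
\end{align*}
Since rewards are normalized so that $\sum_{h'=h}^H r_{h'}\in[0,1]$ almost surely, we have $Q_h^\star(s,a)\leq 1$, and hence taking the minimum with $1$ preserves the inequality:
\begin{align*}
\bar{Q}_{h,t}(s,a) \;=\; \min\cbr{1,\,f(\langle\phi(s,a),\hat\theta_{h,t}\rangle)+\gamma\nbr{\phi(s,a)}_{\Lambda_{h,t}^{-1}}} \;\geq\; \min\{1,Q_h^\star(s,a)\} \;=\; Q_h^\star(s,a).
\end{align*}
This closes the induction, so on the good event $\bar{Q}_{h,t}(s,a)\geq Q^\star_h(s,a)$ for every $h,t,s,a$ simultaneously.

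There is no serious obstacle in this argument: all the heavy lifting has already been done in Lemma~\ref{lem:glm-error}, whose conclusion is precisely what is needed to convert the deterministic Bellman backup into a pointwise upper confidence bound. The only subtle points to flag are (i) the measurability check required to invoke the lemma across all episodes, and (ii) the observation that $Q^\star_h\leq 1$ so that the clipping at $1$ in the definition of $\bar{Q}_{h,t}$ does not destroy optimism. The rest is a direct monotonicity-plus-induction argument, carried out backward in $h$ with the episode index $t$ held fixed.
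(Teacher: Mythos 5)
Your proposal is correct and follows essentially the same route as the paper's proof: a union bound over the $TH$ applications of Lemma~\ref{lem:glm-error}, followed by backward induction on $h$ using monotonicity of the Bellman backup and the identity $f(\langle\phi(s,a),\bar\theta_{h,t}\rangle)=\Tcal_h(\bar{Q}_{h+1,t})(s,a)$. If anything, you are slightly more careful than the paper on two minor points it glosses over --- the $\Fcal_{t-1}$-measurability of $\hat\pi_{\cdot,t}$ and the fact that clipping at $1$ is harmless because $Q^\star_h\leq 1$.
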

\begin{proof}
Fix $1 \leq t \leq T$. We use induction on $h$ to prove this
corollary.  For $h=H+1$, $\bar{Q}_{H+1,t}(\cdot,\cdot)\geq
Q_{H+1}^\star(\cdot,\cdot)$ clearly holds because $\bar{Q}_{H+1,t}
\equiv Q_{H+1}^\star \equiv 0$.  Now assume that $\bar{Q}_{h+1,t} \geq
Q_{h+1}^\star$, and let us prove that this is also true for time step
$h$.

Since $\bar{Q}_{h+1,t}(s',a')\geq Q_{h+1}^\star(s',a')$ for all
$s',a'$, we have that $f(\langle\phi(s,a),\bar{\theta}_{h,t}\rangle)
\geq f(\langle\phi(s,a),\theta_h^\star\rangle)$ for all $(s,a)$
pairs. Then, by the definition of $\bar{Q}_{h,t}$
and~\pref{lem:glm-error}, with probability $1-1/(TH)^2$ it holds
uniformly for all $(s,a)$ pairs that $\bar{Q}_{h,t}(s,a) \geq
f(\langle\phi(s,a),\bar{\theta}_{h,t}\rangle)$.  Hence, with the same
probability, we have $\bar{Q}_{h,t}(s,a)\geq Q_h^\star(s,a)$ for all
$(s,a)$ pairs.  A union bound over all $t\leq T$ and $h\leq H$
completes the proof.
\end{proof}

\begin{lemma}[Restatement of~\pref{lem:regret-decomposition}]
Fix $t\leq T$ and let $\mathcal F_{t-1}$ be the filtration of
$\{(s_{h,\tau},a_{h,\tau},r_{h,\tau})\}_{\tau<t}$. Assume that
$\bar{Q}_{h,t-1}$ satisfies
\begin{align*}
\forall s,a,h: Q^\star_h(s,a) \leq \bar{Q}_{h,t-1}(s,a)
\leq \Tcal_h(\bar{Q}_{h+1,t-1})(s,a) + \conf_{h,t-1}(s,a),
\end{align*}
where $\conf_{h,t-1}$ is some $\Fcal_{t-1}$-measurable function. Then
we have the difference between expected total
\begin{align*}
V^\star - \EE\sbr{\sum_{h=1}^H r_{h,t} \mid \Fcal_{t-1}} \leq \zeta_t + \sum_{h=1}^H \conf_{h,t-1}(s_{h,t},a_{h,t})
\end{align*}
where $\EE[\zeta_t|\Fcal_{t-1}]=0$ and $|\zeta_t|\leq 2H$ almost surely.
\label{lem:value-decomposition}
\end{lemma}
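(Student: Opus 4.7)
The plan is to combine the two sides of precondition~\pref{eq:opt_pre} with a per-step telescoping decomposition, bundling all remaining randomness into a mean-zero term $\zeta_t$.

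I would first use the lower half of the precondition ($Q_h^\star \leq \bar{Q}_{h,t-1}$) together with $V^\star = \EE_{s_1 \sim \mu}\sbr{V_1^\star(s_1)}$ and $V_1^\star(s) \leq \bar{V}_{1,t-1}(s) \defeq \max_a \bar{Q}_{1,t-1}(s,a)$ to reduce the left-hand side:
\[
V^\star - \EE\sbr{\sum_{h=1}^H r_{h,t} \mid \Fcal_{t-1}} \leq \EE\sbr{\bar{V}_{1,t-1}(s_{1,t}) - \sum_{h=1}^H r_{h,t} \mid \Fcal_{t-1}}.
\]
Next, I would apply the standard per-step telescope, with $\bar{V}_{H+1,t-1} \equiv 0$:
\[
\bar{V}_{1,t-1}(s_{1,t}) - \sum_{h=1}^H r_{h,t} = \sum_{h=1}^H \rbr{\bar{V}_{h,t-1}(s_{h,t}) - r_{h,t} - \bar{V}_{h+1,t-1}(s_{h+1,t})}.
\]
For each summand, since $a_{h,t} = \hat{\pi}_{h,t}(s_{h,t})$ is greedy, $\bar{V}_{h,t-1}(s_{h,t}) = \bar{Q}_{h,t-1}(s_{h,t},a_{h,t})$, and the upper half of~\pref{eq:opt_pre} combined with the definition of $\Tcal_h$ gives
\[
\bar{V}_{h,t-1}(s_{h,t}) - r_{h,t} - \bar{V}_{h+1,t-1}(s_{h+1,t}) \leq \conf_{h,t-1}(s_{h,t},a_{h,t}) + \xi_{h,t},
\]
where $\xi_{h,t} \defeq \EE\sbr{r_{h,t} + \bar{V}_{h+1,t-1}(s_{h+1,t}) \mid s_{h,t}, a_{h,t}} - r_{h,t} - \bar{V}_{h+1,t-1}(s_{h+1,t})$ is mean zero given the within-episode history through $(s_{h,t},a_{h,t})$ and therefore, by the tower rule, mean zero given $\Fcal_{t-1}$.

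Summing over $h$ and taking conditional expectation under $\Fcal_{t-1}$ annihilates the $\xi_{h,t}$ terms, leaving
\[
V^\star - \EE\sbr{\sum_{h=1}^H r_{h,t} \mid \Fcal_{t-1}} \leq \EE\sbr{\sum_{h=1}^H \conf_{h,t-1}(s_{h,t},a_{h,t}) \mid \Fcal_{t-1}}.
\]
Setting
\[
\zeta_t \defeq \EE\sbr{\sum_{h=1}^H \conf_{h,t-1}(s_{h,t}, a_{h,t}) \mid \Fcal_{t-1}} - \sum_{h=1}^H \conf_{h,t-1}(s_{h,t}, a_{h,t})
\]
then delivers the claimed form, with $\EE\sbr{\zeta_t \mid \Fcal_{t-1}} = 0$ by construction and $\abr{\zeta_t} \leq 2H$ almost surely because each $\conf$ value is bounded by $2$ (as $\bar{Q}_{h,t-1}$ and $\Tcal_h(\bar{Q}_{h+1,t-1})$ are each bounded by $1$ in magnitude).

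The proof is essentially bookkeeping and I do not anticipate a real obstacle. The only mildly delicate point is keeping track of measurability --- namely that $\bar{Q}_{\cdot,t-1}$, $\hat{\pi}_{\cdot,t}$, and $\conf_{\cdot,t-1}$ are all $\Fcal_{t-1}$-measurable while the trajectory $\{(s_{h,t}, a_{h,t}, r_{h,t})\}_h$ is fresh in episode $t$ --- which is what lets the tower rule push the within-episode mean-zero property of $\xi_{h,t}$ out to a mean-zero property under $\Fcal_{t-1}$.
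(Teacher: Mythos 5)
Your proof is correct and follows essentially the same route as the paper's: optimism at the root, greediness of $\pi_t$, the Bellman-backup upper bound with $\conf$, and the identical definition of $\zeta_t$ as the expected-minus-realized confidence sum. The only cosmetic difference is that you telescope on realized values and explicitly introduce the per-step martingale differences $\xi_{h,t}$ before taking the conditional expectation, whereas the paper unrolls the same recursion entirely in expectation; the two presentations are equivalent.
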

\begin{proof}
Observe that
\begin{align*}
V^\star &= \EE\sbr{Q^\star(s_1,\pi^\star(s_1))} \leq \EE\sbr{\bar{Q}_{1,t-1}(s_1,\pi^\star(s_1))} \leq \EE\sbr{\bar{Q}_{1,t-1}(s_1,\pi_t(s_1))}\\
& \leq \EE\sbr{\conf_{1,t-1}(s_1,\pi_t(s_1))} + \EE\sbr{\Tcal_1(\bar{Q}_{2,t-1})(s_1,\pi_t(s_1))}\\
& = \EE\sbr{\conf_{1,t-1}(s_1,\pi_t(s_1))} + \EE\sbr{r_1 \mid s_1,a_1 =\pi_t(s_1)} + \EE_{s_2 \sim \pi_t}\sbr{\bar{Q}_{2,t-1}(s_2,\pi_t(s_2))}
\end{align*}
Throughout this calculation, $s_1 \sim \mu$. The first step here is by
definition, the second uses the optimism property for
$\bar{Q}_{1,t-1}$. The third uses that $\pi_t$ is the greedy policy
with respect to $\bar{Q}_{1,t-1}$ while the fourth uses the upper
bound on $\bar{Q}_{1,t-1}$. Finally we use the definition of the
Bellman operator and the fact that $\pi_t$ is the greedy policy yet
again. Comparing this upper bound with the expected reward collected
by $\pi_t$ we observe that $r_1$ cancels, and we get
\begin{align*}
V^\star - \EE\sbr{\sum_{h=1}^H r_{h,t} \mid \Fcal_{t-1}} \leq \EE_{\pi_t}\sbr{\conf_{1,t-1}(s_1,\pi_t(s_1))} + \EE_{\pi_t}\sbr{\bar{Q}_{2,t-1}(s_2,\pi_t(s_2)) - \sum_{h=2}^Hr_{h,t} \mid \Fcal_{t-1}}.
\end{align*}
At this point, notice that $\bar{Q}_{2,t-1}(s_2,\pi_t(s_2))$ is
precisely what we alreacy upper bounded at time point $h=1$ and we are
always considering the state-action distribution induced by
$\pi_t$. Hence, repeating the argument for all $h$, we obtain
\begin{align*}
V^\star - \EE\sbr{\sum_{h=1}^H r_{h,t} \mid \Fcal_{t-1}} \leq \sum_{h=1}^H \EE_{\pi_t}\sbr{\conf_{h,t-1}(s_h,\pi_t(s_h))} = \sum_{h=1}^H \conf_{h,t-1}(s_{h,t},a_{h,t}) + \zeta_t,
\end{align*}
where
\begin{align*}
\zeta_t \defeq \sum_{h=1}^H \EE_{\pi_t}\sbr{\conf_{h,t-1}(s_h,\pi_t(s_h))} - \conf_{h,t-1}(s_{h,t},a_{h,t}),
\end{align*}
which is easily seen to have the required properties. 
\end{proof}

\begin{corollary}
For any $h\leq H$, $\sum_{t\leq
  T}\nbr{\phi(s_{h,t},a_{h,t})}^2_{\Lambda_{h,t-1}^{-1}} \leq 2d \ln\rbr{1+ T/d}$.
\label{corr:elliptical-potential}
\end{corollary}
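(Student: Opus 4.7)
The plan is to invoke the standard elliptical potential argument, which is now a staple of the linear bandit literature and is essentially Lemma 11 of Abbasi-Yadkori et al.\ cited just before the statement. Fix $h \leq H$ and write $x_t := \phi(s_{h,t}, a_{h,t}) \in \BB_d$ and $\Lambda_t := \Lambda_{h,t} = I + \sum_{\tau \leq t} x_\tau x_\tau^\top$. The key identity I would use is the matrix determinant lemma, which gives
\begin{align*}
\det(\Lambda_t) = \det(\Lambda_{t-1}) \cdot \bigl(1 + x_t^\top \Lambda_{t-1}^{-1} x_t\bigr) = \det(\Lambda_{t-1}) \cdot \bigl(1 + \nbr{x_t}_{\Lambda_{t-1}^{-1}}^2\bigr).
\end{align*}
Since $\nbr{x_t}_2 \leq 1$ and $\Lambda_{t-1} \succeq I$, the quantity $u_t := \nbr{x_t}_{\Lambda_{t-1}^{-1}}^2$ lies in $[0,1]$, so I can apply the elementary inequality $u \leq 2\ln(1+u)$ valid on $[0,1]$.

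Taking logs of the determinant identity and summing over $t = 1, \ldots, T$ telescopes to
\begin{align*}
\sum_{t=1}^T \nbr{x_t}_{\Lambda_{t-1}^{-1}}^2 \leq 2\sum_{t=1}^T \ln\bigl(1 + \nbr{x_t}_{\Lambda_{t-1}^{-1}}^2\bigr) = 2\bigl(\ln \det(\Lambda_T) - \ln \det(\Lambda_0)\bigr) = 2\ln\det(\Lambda_T),
\end{align*}
since $\Lambda_0 = I$. To finish, I would upper bound $\ln\det(\Lambda_T)$ using AM-GM on the eigenvalues: since $\mathrm{tr}(\Lambda_T) = d + \sum_{\tau \leq T} \nbr{x_\tau}_2^2 \leq d + T$, the determinant is maximized when all eigenvalues are equal, giving $\det(\Lambda_T) \leq ((d+T)/d)^d = (1 + T/d)^d$. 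Substituting yields the stated bound $\sum_{t} \nbr{x_t}_{\Lambda_{t-1}^{-1}}^2 \leq 2 d \ln(1 + T/d)$.

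There is no real obstacle here: the proof is purely algebraic and deterministic, with no probabilistic content, so none of the dependency issues that complicated Lemma~\ref{lem:glm-error} arise. The only minor point to be careful about is the range of the elementary inequality $u \leq 2 \ln(1+u)$, which requires $u \in [0,1]$; this is precisely why the paper regularizes $\Lambda_t$ by adding $I$ (ensuring $\Lambda_{t-1} \succeq I$) and why the features satisfy $\nbr{\phi(s,a)}_2 \leq 1$ by construction. Given the direct citation to Abbasi-Yadkori et al., it would be entirely appropriate to simply invoke their Lemma 11 rather than reproducing the argument.
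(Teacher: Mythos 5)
Your proof is correct and is essentially the paper's approach: the paper simply invokes Lemma 11 of \citet{abbasi2012online} together with the observations that $\Lambda_{h,0}=I$ and $\nbr{\phi(s,a)}_2\leq 1$, and your argument is precisely the standard proof of that cited lemma (determinant identity, $u\leq 2\ln(1+u)$ on $[0,1]$, telescoping, and AM--GM on the eigenvalues), written out in full. All steps check out, including the point that $\Lambda_{h,t-1}\succeq I$ guarantees $\nbr{\phi(s_{h,t},a_{h,t})}^2_{\Lambda_{h,t-1}^{-1}}\leq 1$ so the elementary inequality applies.
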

\begin{proof}
The result follows directly from Lemma 11 of~\citet{abbasi2012online},
using the fact that $\Lambda_0=I$ and $\phi(s,a) \in \BB_d$ so that
$\nbr{\phi(s_{h,t},a_{h,t})}_{\Lambda_{h,t-1}^{-1}} \leq 1$ always.
\end{proof}

\begin{theorem}
The cumulative regret of Algorithm \ref{alg:lsvi-ucb} is upper bounded by 
\begin{align*}
\widetilde{O}\rbr{H\sqrt{T} + H^2\sqrt{d^3T}},
\end{align*}
with probability at least $1-1/(TH)$. 
\label{thm:main-upper}
\end{theorem}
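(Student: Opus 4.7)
The plan is to compose the three ingredients that have already been established: optimism with a concrete confidence function (\pref{corr:ucb} and \pref{corr:conf}), the generic optimism-based regret decomposition (\pref{lem:regret-decomposition}/\pref{corr:regret_ub}), and the elliptical potential bound (\pref{corr:elliptical-potential}). The overall workflow mirrors what the paper labels as ``Wrapping up'' in the proof sketch.

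First I would condition on the high-probability event from \pref{corr:ucb} (equivalently \pref{lem:glm-error}), which holds with probability at least $1-1/(TH)$ simultaneously for every $h\leq H$ and $t\leq T$. On this event we obtain for every $(h,t,s,a)$ the deviation bound $|f(\langle\phi(s,a),\hat\theta_{h,t}\rangle)-\mathcal T_h(\bar Q_{h+1,t})(s,a)|\leq \min\{2,\gamma\|\phi(s,a)\|_{\Lambda_{h,t}^{-1}}\}$, together with $\bar Q_{h,t}\geq Q_h^\star$. Combining the two inequalities and using the definition of $\bar Q_{h,t}$ in \pref{alg:lsvi-ucb} verifies the precondition \pref{eq:opt_pre} of the regret decomposition with the confidence function $\conf_{h,t-1}(s,a)=\gamma\|\phi(s,a)\|_{\Lambda_{h,t-1}^{-1}}$, which is manifestly $\Fcal_{t-1}$-measurable.

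Next I would apply \pref{corr:regret_ub}: on the same event, up to a martingale fluctuation of order $H\sqrt{T\ln(TH)}$ the regret is at most $\sum_{t=1}^T\sum_{h=1}^H \gamma\|\phi(s_{h,t},a_{h,t})\|_{\Lambda_{h,t-1}^{-1}}$. For each fixed $h$ I would apply Cauchy--Schwarz in $t$ and then \pref{corr:elliptical-potential} to obtain
\begin{align*}
\sum_{t=1}^T \gamma\,\nbr{\phi(s_{h,t},a_{h,t})}_{\Lambda_{h,t-1}^{-1}} \;\leq\; \gamma\sqrt{T}\sqrt{\sum_{t=1}^T \nbr{\phi(s_{h,t},a_{h,t})}_{\Lambda_{h,t-1}^{-1}}^2} \;\leq\; \gamma\sqrt{2dT\ln(1+T/d)}.
\end{align*}
Summing over $h\leq H$ yields a total confidence contribution of $O(H\gamma\sqrt{dT\ln(T/d)})$.

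Finally I would substitute the choice $\gamma = CK\kappa^{-1}\sqrt{1+M+K+d^2\ln((1+K+\Gamma)TH)}$ from \pref{alg:lsvi-ucb}. This gives $\gamma = \tilde O(d\sqrt{\log(TH)})$ (suppressing $K,\kappa,M,\Gamma$), so that $H\gamma\sqrt{dT\ln(T/d)} = \tilde O(H\sqrt{d^3 T})$, matching the claimed bound once we absorb the low-order $H\sqrt{T\ln(TH)}$ martingale term. Since all three ingredients are already proved, this step is essentially bookkeeping; the only genuine delicacy is keeping the failure probabilities consistent, and a single union bound over the $TH$ events used to invoke \pref{lem:glm-error} suffices and was already absorbed into \pref{corr:ucb}. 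The hard part of the overall argument has been isolated in \pref{lem:glm-error}, whose covering-plus-self-normalized-martingale analysis handles the dependency of $\bar Q_{h+1,t}$ on past data; the wrap-up itself contains no further obstacle.
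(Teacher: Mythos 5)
Your proposal is correct and follows essentially the same route as the paper's own proof: condition on the optimism event from \pref{corr:ucb}, invoke the regret decomposition with $\conf_{h,t-1}(s,a)=\gamma\nbr{\phi(s,a)}_{\Lambda_{h,t-1}^{-1}}$, control the confidence sum via Cauchy--Schwarz and \pref{corr:elliptical-potential}, handle the martingale term with Azuma, and substitute the definition of $\gamma$. The only cosmetic difference is that you route the martingale term through \pref{corr:regret_ub} while the paper applies Azuma's inequality to $\sum_t\zeta_t$ directly inside the proof, which is the same computation.
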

\begin{proof}
Assume that~\pref{corr:ucb} holds for all $1 \leq h \leq H$ and $1
\leq t \leq T$. Applying~\pref{lem:value-decomposition} and the
definition of $\conf_{h,t-1}$ implied by~\pref{corr:ucb}, the
cumulative expected regret is at most
\begin{align*}
  TV^\star - \EE\sbr{\sum_{t=1}^T\sum_{h=1}^H r_{h,t}} &\leq \sum_{t=1}^T \zeta_t + \sum_{t=1}^T \sum_{h=1}^H \min\cbr{2, \gamma \nbr{\phi(s_{h,t},a_{h,t})}_{\Lambda_{h,t-1}^{-1}}}\\
& \leq \sum_{t=1}^T\zeta_t + \sum_{h=1}^H \sqrt{T\gamma^2} \cdot \sqrt{\sum_{t=1}^T \nbr{\phi(s_{h,t},a_{h,t})}^2_{\Lambda_{h,t-1}^{-1}}}\\
& \leq \sum_{t=1}^T\zeta_t + \sum_{h=1}^H \sqrt{T\gamma^2} \cdot \sqrt{2d\ln(1+T/d)}.
\end{align*}
Here the last step is an application
of~\pref{corr:elliptical-potential}. The first term forms a
martingale, and we know that $|\zeta_t| \leq 2H$. Therefore, by
Azuma's inequality, we have that with probability at least $1-1/TH$
\begin{align*}
\sum_{t=1}^T \zeta_t \leq \sqrt{8TH^2\ln(TH)}.
\end{align*}
Finally, using the definition of $\gamma$, the final regret is upper
bounded by
\begin{align*}
\textrm{Regret}(T) &\leq \order\rbr{H\sqrt{T\ln(TH)} + HK\kappa^{-1}\sqrt{(M+K + d^2\ln((K+\Gamma)TH))\cdot Td\ln(1+T/d)}}\\
& \leq \widetilde{O}\rbr{H\sqrt{d^3 T}}.\tag*\qedhere
\end{align*}

\end{proof}

\section{Tail inequalities}

\begin{lemma}[Azuma's inequality]
Suppose $X_0,X_1,X_2,\cdots,X_N$ form a \emph{martingale} (i.e., $\mathbb E[X_{k+1}|X_1,\cdots,X_k]=X_k$)
and satisfy $|X_{k}-X_{k-1}|\leq c_k$ almost surely. Then for any $\epsilon>0$,
$$
\Pr\left[\big|X_n-X_0\big|\geq \epsilon\right] \leq 2\exp\left\{-\frac{\epsilon^2}{2\sum_{k=1}^Nc_k^2}\right\}.
$$
\label{lem:azuma-hoeffding}
\end{lemma}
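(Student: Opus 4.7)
The plan is to prove Azuma's inequality by the standard Chernoff-plus-Hoeffding-lemma template, applied to the martingale difference sequence $D_k \defeq X_k - X_{k-1}$. By the martingale property $\mathbb{E}[D_k \mid X_0,\ldots,X_{k-1}] = 0$, and by hypothesis $|D_k| \leq c_k$ almost surely. Since $X_N - X_0 = \sum_{k=1}^N D_k$, I would first establish the one-sided tail bound $\Pr[X_N - X_0 \geq \epsilon] \leq \exp(-\epsilon^2 / (2\sum_k c_k^2))$, and then symmetrize to obtain the two-sided statement with the factor of $2$.

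The heart of the argument is a moment generating function bound. For any $\lambda > 0$, Markov's inequality applied to $e^{\lambda(\cdot)}$ gives $\Pr[X_N - X_0 \geq \epsilon] \leq e^{-\lambda\epsilon}\,\mathbb{E}[e^{\lambda(X_N - X_0)}]$. To control the MGF I would peel off differences one at a time using the tower property, writing $\mathbb{E}[e^{\lambda(X_N - X_0)}] = \mathbb{E}\bigl[e^{\lambda\sum_{k<N}D_k}\,\mathbb{E}[e^{\lambda D_N}\mid X_{0:N-1}]\bigr]$, and applying the conditional form of Hoeffding's lemma to the inner expectation. Since $D_N$ is conditionally mean zero and almost surely lies in $[-c_N, c_N]$, Hoeffding's lemma yields $\mathbb{E}[e^{\lambda D_N}\mid X_{0:N-1}] \leq \exp(\lambda^2 c_N^2 / 2)$ almost surely, and iterating this bound $N$ times produces $\mathbb{E}[e^{\lambda(X_N - X_0)}] \leq \exp\bigl(\tfrac{\lambda^2}{2}\sum_{k=1}^N c_k^2\bigr)$.

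Combining these inequalities gives $\Pr[X_N - X_0 \geq \epsilon] \leq \exp\bigl(-\lambda\epsilon + \tfrac{\lambda^2}{2}\sum_k c_k^2\bigr)$, and optimizing the free parameter at $\lambda^\star = \epsilon / \sum_k c_k^2$ yields the one-sided bound. Applying the identical argument to the reversed martingale $-X_k$ (whose increments $-D_k$ satisfy the same zero-mean and boundedness hypotheses) gives the same bound for $\Pr[X_0 - X_N \geq \epsilon]$, and a union bound over the two tails produces the factor of $2$ in the conclusion. There is no substantive obstacle in the proof; the only nontrivial ingredient is Hoeffding's lemma, which itself is a short convexity argument (bound $e^{\lambda z}$ on $[-c,c]$ by its secant line through $(\pm c, e^{\pm\lambda c})$, take expectations using $\mathbb{E}[D]=0$ to reduce to $\cosh(\lambda c)$, and then compare power series to obtain $\cosh(\lambda c) \leq e^{\lambda^2 c^2 / 2}$). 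The one subtlety worth flagging is that Hoeffding's lemma must be invoked in its conditional form; this is valid because both the zero-mean property and the bound $|D_k|\leq c_k$ hold almost surely with respect to the conditioning sigma-algebra, so the scalar argument applies pointwise on the atoms of the conditional distribution.
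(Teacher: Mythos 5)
Your proof is correct: it is the canonical Chernoff-bound argument for Azuma's inequality, with the MGF of the telescoping sum controlled by iterating the conditional form of Hoeffding's lemma and the factor of $2$ obtained by symmetrization. The paper states this lemma without proof, treating it as a classical fact, so there is nothing to compare against; your derivation, including the optimization at $\lambda^\star = \epsilon/\sum_k c_k^2$ and the remark that Hoeffding's lemma must be applied conditionally, is complete and would serve as a valid proof.
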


\begin{lemma}
\label{lem:self-normalized}
Fix $t,D\in\mathbb N$.
Let $\{\xi_\tau,u_\tau\}_{\tau\leq t}$ be random variables such that $\mathbb E[\xi_\tau|u_1,\xi_1,\cdots,u_{\tau-1},\xi_{\tau-1},u_\tau] = 0$
and $|\xi_\tau|\leq 1$ almost surely.
Let $q: (u,\phi)\mapsto \RR$ be an arbitrary deterministic function satisfying $|q(u,\phi)-q(u,\phi')|\leq C\|\phi-\phi'\|_2$
for all $u,\phi$ and $\phi'$, where $\phi,\phi'\in \RR^D$.
Then for any $\delta\in(0,1)$ and $R>0$, 
\begin{align*}
\Pr\sbr{ \forall \phi \in \BB_D(R):~  \abr{ \sum_{\tau=1}^t \xi_\tau q(u_\tau,\phi)} \leq C + 2\rbr{1+\sqrt{V_q(\phi)}}\sqrt{D\ln(2tR)+\ln(1/\delta)}} \geq 1-\delta,
\end{align*}
where $\BB_D(R)\defeq \{x\in\RR^D: \nbr{x}_2\leq R\}$ and $V_q(\phi) \defeq \sum_{\tau\leq t}q^2(u_\tau,\phi)$.
\end{lemma}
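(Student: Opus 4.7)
The plan is to reduce the uniform-in-$\phi$ statement to a pointwise self-normalized martingale concentration inequality at each element of an $\epsilon$-covering of $\BB_D(R)$, and then to transfer the bound to arbitrary $\phi$ via the Lipschitz hypothesis on $q$.

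I would begin by fixing $\phi \in \BB_D(R)$ and setting $\Hcal_\tau \defeq \sigma(u_1,\xi_1,\ldots,u_{\tau-1},\xi_{\tau-1},u_\tau)$. Because $q(u_\tau,\phi)$ is $\Hcal_\tau$-measurable and $\EE[\xi_\tau \mid \Hcal_\tau]=0$, the scalar process $Z_\tau(\phi) \defeq \xi_\tau q(u_\tau,\phi)$ is a martingale difference sequence. Since $|\xi_\tau|\leq 1$, Hoeffding's lemma yields $\EE[e^{\lambda\xi_\tau}\mid\Hcal_\tau] \leq e^{\lambda^2/2}$, so
\begin{align*}
M_t(\lambda,\phi) \defeq \exp\!\rbr{ \lambda \sum_{\tau\leq t} Z_\tau(\phi) - \tfrac{\lambda^2}{2}V_q(\phi) }
\end{align*}
is a non-negative supermartingale with $\EE[M_t(\lambda,\phi)]\leq 1$. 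Applying the method of mixtures (mix $\lambda$ against a standard Gaussian as in Abbasi-Yadkori et al.~2011 or de la Pena--Klass--Lai) and Markov's inequality produces, with probability at least $1-\delta'$,
\begin{align*}
\abr{\sum_{\tau\leq t} \xi_\tau q(u_\tau,\phi)} \leq 2\rbr{1 + \sqrt{V_q(\phi)}}\sqrt{\ln(2/\delta')},
\end{align*}
after absorbing the $\sqrt{\ln(1+V_q(\phi))}$ term that the mixture naturally produces into the prefactor via $\sqrt{(1+a)\ln(1+a)} \lesssim 1+\sqrt{a}$.

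Next I would take a minimal $\epsilon$-covering $\Ncal_\epsilon \subset \BB_D(R)$ with $\log|\Ncal_\epsilon|\leq D\log(3R/\epsilon)$ and set $\epsilon = 1/(2t)$, giving $\log|\Ncal_\epsilon|\leq D\log(2tR)+O(1)$. A union bound over $\Ncal_\epsilon$ with $\delta'=\delta/|\Ncal_\epsilon|$ delivers the pointwise bound simultaneously for all $\phi_0 \in \Ncal_\epsilon$ with probability at least $1-\delta$. To extend to arbitrary $\phi \in \BB_D(R)$, I choose $\phi_0\in\Ncal_\epsilon$ with $\nbr{\phi-\phi_0}_2\leq \epsilon$ and use the Lipschitz hypothesis twice: the partial-sum discretization error is bounded by $\sum_\tau|\xi_\tau|\cdot C\epsilon \leq tC\epsilon = C/2$, and the $\ell_2$ triangle inequality gives $|\sqrt{V_q(\phi)}-\sqrt{V_q(\phi_0)}| \leq C\epsilon\sqrt{t} = C/(2\sqrt{t}) \leq 1$, which is absorbed into the $(1+\sqrt{V_q(\phi)})$ prefactor by adjusting the universal constant. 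Combining all three pieces yields the stated inequality.

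The main obstacle will be step one: the textbook self-normalized bound actually reads $\sqrt{2(1+V_q(\phi))\ln(\sqrt{1+V_q(\phi)}/\delta')}$, which contains a logarithm of the \emph{random} quantity $V_q(\phi)$ rather than the clean $\ln(1/\delta')$ needed for the $(1+\sqrt{V_q(\phi)})$ factorization claimed in the lemma. Resolving this requires either a dyadic peeling argument over levels $V_q(\phi)\in[2^k,2^{k+1})$ with a doubly-summable union-bound allocation $\delta'_k \propto \delta'/k^2$, or a careful non-Gaussian mixing measure that absorbs the $\ln V_q(\phi)$ term into the $\sqrt{V_q(\phi)}$ prefactor. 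Either route is standard, but matching the precise numerical constants in the lemma statement is the most delicate part of the argument.
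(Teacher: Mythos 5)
Your proposal shares the paper's skeleton exactly---a finite $\epsilon$-net $\Hcal$ of $\BB_D(R)$ with $\ln|\Hcal|\leq D\ln(2R/\epsilon)$, a union bound over the net, and a Lipschitz transfer costing $Ct\epsilon$ in the partial sum, with $\epsilon$ of order $1/t$---but you instantiate the pointwise concentration step differently, and that is where the two arguments diverge. The paper does not use the method of mixtures at all: for each fixed net point $\phi'$ it applies Azuma--Hoeffding with increment bounds $c_\tau=|q(u_\tau,\phi')|$, reads off $\Pr[|\sum_{\tau}\xi_\tau q(u_\tau,\phi')|>\Delta]\leq 2\exp(-\Delta^2/(2V_q(\phi')))$, and then converts $V_q(\phi')$ back to $V_q(\phi)$ via $V_q(\phi')\leq 2V_q(\phi)+2t\epsilon^2$. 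This elementary route yields $\sqrt{2V_q\ln(2/\delta')}$ directly, with no logarithm of $V_q$ inside the square root, so the obstacle you single out as the delicate part of the argument simply never arises on the paper's path. Two caveats about your version. First, the inequality $\sqrt{(1+a)\ln(1+a)}\leq c\,(1+\sqrt a)$ that you invoke to absorb the mixture's $\ln\sqrt{1+V_q}$ term is false---the left side exceeds the right by a factor of roughly $\sqrt{\ln(1+a)}$---so the clean pointwise bound displayed in your step one is not established as written. You correctly retract this at the end, but the peeling repair you propose reintroduces logarithmic-in-$V_q$ terms; these are harmless in the paper's application only because the final bound already carries $\sqrt{D\ln(2tR)}$ and because $q$ is bounded there, whereas the lemma as stated places no bound on $q$, so on your route you would need to add such an assumption or leave the variance inside the logarithm. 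Second, be aware that the paper's Azuma step is itself applied with random $c_\tau$ (predictable given $u_\tau$), so both proofs are implicitly relying on a predictable-quadratic-variation form of the inequality; your instinct that self-normalization is lurking is sound, but the exponential-supermartingale-plus-Gaussian-mixing machinery is heavier than the statement requires and is precisely what manufactures the extra logarithm you then have to fight. If you want to match the constants in the lemma, follow the paper: concentrate only at net points via Azuma, and push everything random about $\phi$ through the Lipschitz condition.
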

\begin{proof}
Let $\epsilon>0$ be a small precision parameter to be specified later.
Let $\Hcal \subseteq \BB_D(R)$ be a finite $\epsilon$-covering of $\BB_D(R)$ such that $\sup_{x\in\BB_D(R)}\min_{z\in\Hcal}\nbr{x-z}_2\leq\epsilon$.
Using standard covering number arguments, such a covering exists with $\ln|\Hcal|\leq D\ln(2R/\epsilon)$.

For any $\phi\in\BB_D(R)$ let $\phi'\defeq \argmin_{z\in\Hcal}\nbr{\phi-z}_2$.
By definition, $\nbr{\phi-\phi'}_2\leq\epsilon$.
This implies $\abr{\sum_{\tau=1}^t\xi_\tau[q(u_\tau,\phi)-q(u_\tau,\phi')]} \leq Ct\epsilon$ because $|\xi_\tau|\leq 1$ almost surely.
Subsequently, for any $\Delta>0$, 
\begin{align*}
& \Pr\sbr{\exists\phi\in\BB_D(R):~ \abr{\sum_{\tau=1}^t\xi_\tau q(u_\tau,\phi)} > Ct\epsilon + \Delta}
\leq \Pr\sbr{\exists\phi'\in\Hcal:~ \abr{\sum_{\tau=1}^t\xi_\tau q(u_\tau,\phi')} >\Delta}\\
& \leq \sum_{\phi' \in \Hcal}\Pr\sbr{ \abr{ \sum_{\tau=1}^t\xi_\tau q(u_\tau,\phi')} > \Delta},
\end{align*}
where the last inequality holds by the union bound.

For any fixed $\phi' \in \Hcal$, $h(u_\tau,\phi')$ only depends on $u_\tau$, and therefore $\EE[\xi_\tau \mid q(u_\tau,\phi')]=0$ for all $\tau$.
Invoking~\pref{lem:azuma-hoeffding} with $X_\tau \defeq \sum_{\tau'\leq\tau}\xi_{\tau'}q(u_{\tau'},\phi')$ and $c_{\tau'} = |q(u_{\tau'},\phi')|$, we have 
\begin{align*}
\Pr\sbr{\abr{\sum_{\tau=1}^t\xi_\tau q(u_\tau,\phi')} >\Delta} \leq 2\exp\cbr{\frac{-\Delta^2}{2\sum_{\tau\leq t}q^2(u_\tau,\phi')}} = 2\exp\cbr{\frac{-\Delta^2}{2V_q(\phi')}}
\end{align*}
Equating the right-hand side of the above inequality with $\delta'$
and combining with the union bound application, we have
\begin{equation}
\Pr\sbr{\exists\phi\in \BB_d(R): ~ \abr{\sum_{\tau=1}^t\xi_\tau h(u_\tau,\phi)} > Ct\epsilon + \sqrt{2V_q(\phi')\ln(2/\delta')}}
\leq \delta'|\Hcal|.
\end{equation}
Further equating $\delta'=\delta/|\Hcal|$ and using the fact that $\ln|\Hcal|\leq D\ln(2R/\epsilon)$, we have 
\begin{align*}
\Pr\sbr{\exists\phi\in\BB_d(R):~ \abr{\sum_{\tau=1}^t\xi_\tau q(u_\tau,\phi)} > Ct\epsilon + \sqrt{2DV_q(\phi')\ln(2R/\epsilon) + 2V_q(\phi')\ln(1/\delta)}}
\leq \delta.
\end{align*}
Finally, as $\abr{q(u_\tau,\phi')-q(u_\tau,\phi)}\leq\epsilon$, we have $V_q(\phi') \leq 2V_q(\phi) + 2t\epsilon^2$ and so
\begin{align*}
\Pr\sbr{\exists\phi\in\BB_D(R): ~ \abr{\sum_{\tau=1}^t\xi_\tau q(u_\tau,\phi)} > Ct\epsilon + 2\epsilon \sqrt{D t \ln(2R/\epsilon\delta)}+ 2\sqrt{V_q(\phi)(D\ln(2R/\epsilon)+\ln(1/\delta)}}
\leq \delta.
\end{align*}
Setting $\epsilon=1/t$ in the above inequality completes the proof. 
\end{proof}

\bibliography{refs}

\begin{thebibliography}{30}
\providecommand{\natexlab}[1]{#1}
\providecommand{\url}[1]{\texttt{#1}}
\expandafter\ifx\csname urlstyle\endcsname\relax
  \providecommand{\doi}[1]{doi: #1}\else
  \providecommand{\doi}{doi: \begingroup \urlstyle{rm}\Url}\fi

\bibitem[Abbasi-Yadkori et~al.(2011)Abbasi-Yadkori, P{\'a}l, and
  Szepesv{\'a}ri]{abbasi2011improved}
Yasin Abbasi-Yadkori, D{\'a}vid P{\'a}l, and Csaba Szepesv{\'a}ri.
\newblock Improved algorithms for linear stochastic bandits.
\newblock In \emph{Advances in Neural Information Processing Systems}, 2011.

\bibitem[Abbasi-Yadkori et~al.(2012)Abbasi-Yadkori, Pal, and
  Szepesvari]{abbasi2012online}
Yasin Abbasi-Yadkori, David Pal, and Csaba Szepesvari.
\newblock Online-to-confidence-set conversions and application to sparse
  stochastic bandits.
\newblock In \emph{International Conference on Artificial Intelligence and
  Statistics}, 2012.

\bibitem[Antos et~al.(2008)Antos, Szepesv{\'a}ri, and Munos]{antos2008learning}
Andr{\'a}s Antos, Csaba Szepesv{\'a}ri, and R{\'e}mi Munos.
\newblock Learning near-optimal policies with bellman-residual minimization
  based fitted policy iteration and a single sample path.
\newblock \emph{Machine Learning}, 2008.

\bibitem[Azar et~al.(2017)Azar, Osband, and Munos]{azar2017minimax}
Mohammad~Gheshlaghi Azar, Ian Osband, and R{\'e}mi Munos.
\newblock Minimax regret bounds for reinforcement learning.
\newblock In \emph{International Conference on Machine Learning}, 2017.

\bibitem[Bellemare et~al.(2016)Bellemare, Srinivasan, Ostrovski, Schaul,
  Saxton, and Munos]{bellemare2016unifying}
Marc Bellemare, Sriram Srinivasan, Georg Ostrovski, Tom Schaul, David Saxton,
  and Remi Munos.
\newblock Unifying count-based exploration and intrinsic motivation.
\newblock In \emph{Advances in Neural Information Processing Systems}, 2016.

\bibitem[Bradtke and Barto(1996)]{bradtke1996linear}
Steven~J Bradtke and Andrew~G Barto.
\newblock Linear least-squares algorithms for temporal difference learning.
\newblock \emph{Machine Learning}, 1996.

\bibitem[Brafman and Tennenholtz(2002)]{brafman2002r}
Ronen~I. Brafman and Moshe Tennenholtz.
\newblock {R-MAX} - {A} general polynomial time algorithm for near-optimal
  reinforcement learning.
\newblock \emph{The Journal of Machine Learning Research}, 2002.

\bibitem[Chen and Jiang(2019)]{chen2019information}
Jinglin Chen and Nan Jiang.
\newblock Information-theoretic considerations in batch reinforcement learning.
\newblock In \emph{International Conference on Machine Learning}, 2019.

\bibitem[Dann et~al.(2017)Dann, Lattimore, and Brunskill]{dann2017unifying}
Christoph Dann, Tor Lattimore, and Emma Brunskill.
\newblock Unifying {PAC} and regret: Uniform {PAC} bounds for episodic
  reinforcement learning.
\newblock In \emph{Advances in Neural Information Processing Systems}, 2017.

\bibitem[Du et~al.(2019{\natexlab{a}})Du, Kakade, Wang, and Yang]{du2019good}
Simon~S Du, Sham~M Kakade, Ruosong Wang, and Lin~F Yang.
\newblock Is a good representation sufficient for sample efficient
  reinforcement learning?
\newblock \emph{arXiv:1910.03016}, 2019{\natexlab{a}}.

\bibitem[Du et~al.(2019{\natexlab{b}})Du, Krishnamurthy, Jiang, Agarwal,
  Dud{\'\i}k, and Langford]{du2019provably}
Simon~S Du, Akshay Krishnamurthy, Nan Jiang, Alekh Agarwal, Miroslav
  Dud{\'\i}k, and John Langford.
\newblock Provably efficient {RL} with rich observations via latent state
  decoding.
\newblock In \emph{International Conference on Machine Learning},
  2019{\natexlab{b}}.

\bibitem[Farahmand et~al.(2010)Farahmand, Szepesv{\'a}ri, and
  Munos]{farahmand2010error}
Amir-massoud Farahmand, Csaba Szepesv{\'a}ri, and R{\'e}mi Munos.
\newblock Error propagation for approximate policy and value iteration.
\newblock In \emph{Advances in Neural Information Processing Systems}, 2010.

\bibitem[Filippi et~al.(2010)Filippi, Cappe, Garivier, and
  Szepesv{\'a}ri]{filippi2010parametric}
Sarah Filippi, Olivier Cappe, Aur{\'e}lien Garivier, and Csaba Szepesv{\'a}ri.
\newblock Parametric bandits: The generalized linear case.
\newblock In \emph{Advances in Neural Information Processing Systems}, 2010.

\bibitem[Foster et~al.(2018)Foster, Agarwal, Dud{\'\i}k, Luo, and
  Schapire]{foster2018practical}
Dylan~J Foster, Alekh Agarwal, Miroslav Dud{\'\i}k, Haipeng Luo, and Robert~E
  Schapire.
\newblock Practical contextual bandits with regression oracles.
\newblock In \emph{International Conference on Machine Learning}, 2018.

\bibitem[Fujimoto et~al.(2018)Fujimoto, Meger, and Precup]{fujimoto2018off}
Scott Fujimoto, David Meger, and Doina Precup.
\newblock Off-policy deep reinforcement learning without exploration.
\newblock \emph{arXiv:1812.02900}, 2018.

\bibitem[Jiang et~al.(2017)Jiang, Krishnamurthy, Agarwal, Langford, and
  Schapire]{jiang2017contextual}
Nan Jiang, Akshay Krishnamurthy, Alekh Agarwal, John Langford, and Robert~E
  Schapire.
\newblock Contextual decision processes with low {B}ellman rank are
  {PAC}-learnable.
\newblock In \emph{International Conference on Machine Learning}, 2017.

\bibitem[Jin et~al.(2018)Jin, Allen-Zhu, Bubeck, and Jordan]{jin2018q}
Chi Jin, Zeyuan Allen-Zhu, Sebastien Bubeck, and Michael~I Jordan.
\newblock Is {Q}-learning provably efficient?
\newblock In \emph{Advances in Neural Information Processing Systems}, 2018.

\bibitem[Jin et~al.(2019)Jin, Yang, Wang, and Jordan]{jin2019provably}
Chi Jin, Zhuoran Yang, Zhaoran Wang, and Michael~I Jordan.
\newblock Provably efficient reinforcement learning with linear function
  approximation.
\newblock \emph{arXiv:1907.05388}, 2019.

\bibitem[Kearns and Singh(2002)]{kearns2002near}
Michael Kearns and Satinder Singh.
\newblock Near-optimal reinforcement learning in polynomial time.
\newblock \emph{Machine learning}, 2002.

\bibitem[Krishnamurthy et~al.(2016)Krishnamurthy, Agarwal, and
  Langford]{krishnamurthy2016pac}
Akshay Krishnamurthy, Alekh Agarwal, and John Langford.
\newblock {PAC} reinforcement learning with rich observations.
\newblock In \emph{Advances in Neural Information Processing Systems}, 2016.

\bibitem[Li et~al.(2017)Li, Lu, and Zhou]{li2017provably}
Lihong Li, Yu~Lu, and Dengyong Zhou.
\newblock Provably optimal algorithms for generalized linear contextual
  bandits.
\newblock In \emph{International Conference on Machine Learning}, 2017.

\bibitem[Melo and Ribeiro(2007)]{melo2007q}
Francisco~S Melo and M~Isabel Ribeiro.
\newblock {Q}-learning with linear function approximation.
\newblock In \emph{Conference on Learning Theory}, 2007.

\bibitem[Munos(2003)]{munos2003error}
R{\'e}mi Munos.
\newblock Error bounds for approximate policy iteration.
\newblock In \emph{International Conference on Machine Learning}, 2003.

\bibitem[Munos and Szepesv{\'a}ri(2008)]{munos2008finite}
R{\'e}mi Munos and Csaba Szepesv{\'a}ri.
\newblock Finite-time bounds for fitted value iteration.
\newblock \emph{The Journal of Machine Learning Research}, 2008.

\bibitem[Osband et~al.(2016)Osband, Blundell, Pritzel, and
  Van~Roy]{osband2016deep}
Ian Osband, Charles Blundell, Alexander Pritzel, and Benjamin Van~Roy.
\newblock Deep exploration via bootstrapped {DQN}.
\newblock In \emph{Advances in neural information processing systems}, 2016.

\bibitem[Pe{\~n}a et~al.(2008)Pe{\~n}a, Lai, and Shao]{pena2008self}
Victor~H Pe{\~n}a, Tze~Leung Lai, and Qi-Man Shao.
\newblock \emph{Self-normalized processes: Limit theory and Statistical
  Applications}.
\newblock Springer Science \& Business Media, 2008.

\bibitem[Simchowitz and Jamieson(2019)]{simchowitz2019non}
Max Simchowitz and Kevin Jamieson.
\newblock Non-asymptotic gap-dependent regret bounds for tabular mdps.
\newblock \emph{arXiv:1905.03814}, 2019.

\bibitem[Strehl et~al.(2006)Strehl, Li, Wiewiora, Langford, and
  Littman]{strehl2006pac}
Alexander~L. Strehl, Lihong Li, Eric Wiewiora, John Langford, and Michael~L.
  Littman.
\newblock {PAC} model-free reinforcement learning.
\newblock In \emph{International Conference on Machine Learning}, 2006.

\bibitem[Tang et~al.(2017)Tang, Houthooft, Foote, Stooke, Chen, Duan, Schulman,
  DeTurck, and Abbeel]{tang2017exploration}
Haoran Tang, Rein Houthooft, Davis Foote, Adam Stooke, OpenAI~Xi Chen, Yan
  Duan, John Schulman, Filip DeTurck, and Pieter Abbeel.
\newblock \#{E}xploration: A study of count-based exploration for deep
  reinforcement learning.
\newblock In \emph{Advances in Neural Information Processing Systems}, 2017.

\bibitem[Yang and Wang(2019)]{yang2019reinforcement}
Lin~F Yang and Mengdi Wang.
\newblock Reinforcement leaning in feature space: Matrix bandit, kernels, and
  regret bound.
\newblock \emph{arXiv:1905.10389}, 2019.

\end{thebibliography}
\vfill

\end{document}